\def\eqref#1{equation~\ref{#1}}
\def\1{\bm{1}}
\def\vx{{\bm{x}}}
\def\vy{{\bm{y}}}
\DeclareMathAlphabet{\mathsfit}{\encodingdefault}{\sfdefault}{m}{sl}
\SetMathAlphabet{\mathsfit}{bold}{\encodingdefault}{\sfdefault}{bx}{n}
\newcommand{\R}{\mathbb{R}}
\DeclareMathOperator*{\argmin}{arg\,min}
\newtheorem{theorem}{Theorem}
\newtheorem{definition}{Definition}
\newtheorem{lemma}{Lemma}
\title{Neural Multi-Quantile Forecasting for Optimal Inventory Management}
\author{
  Federico Garza Ramírez \\
  ITAM \\
  \texttt{fede.garza.ramirez@gmail.com} \\
}
\begin{document}
\maketitle

\begin{abstract}
In this work we propose the use of quantile regression and dilated recurrent neural networks with temporal scaling (MQ-DRNN-s) and apply it to the inventory management task. This model showed a better performance of up to 3.2\% over a statistical benchmark (the quantile autoregressive model with exogenous variables, QAR-X), being better than the MQ-DRNN without temporal scaling by 6\%. The above on a set of 10,000 time series of sales of El Globo over a 53-week horizon using rolling windows of 7-day ahead each week.
\end{abstract}

\keywords{Deep Learning \and Time Series \and Quantile Regression}

\section{Introduction}

With the increasing generation of data over time and ever-increasing storage capacity, companies face the opportunity to make better decisions. Time series forecasting has proven particularly useful to leverage this information in inventory management. Inventory management benefits from improved forecasting and the generation of different scenarios, which translates into less transportation (reducing costs and CO2 emissions) and better production planning (less waste), as noted by \cite{oreshkin2020metalearning}.




In this work, inspired by the recent success of neural networks in the datasets of the third and fourth Makridakis competitions for time series forecasting (M3, \cite{MAKRIDAKIS2000451} and M4, \cite{makridakis2018m4competition_results}) and the Tourism dataset \citep{ATHANASOPOULOS2011822}, and the necessity to improve the quality of forecasts for the inventory management task, we propose a model based on \emph{Dilated Recurrent Neural Networks} capable of generating multiple quantiles with adaptive scaling (MQ-DRNN-s) that proved to be up to 3.1\% better than statistical models. These results were obtained on 10,000 products from one of the largest retail companies in Mexico for 53 weeks.

The work developed by \cite{chang2017dilatedRNN} is the main inspiration for the model described in this document. They propose the \emph{Dilated Recurrent Neural Network} as a faster and more accurate alternative to the \emph{Recurrent Neural Network}, which is consistent with the results found in this paper. The MQ-DRNN-s explored here also includes data preprocessing within the model, a fundamental feature for the proper functioning of machine learning models. This data preprocessing handled by the model facilitates its deployment to production since less data engineering is required to obtain reasonably good results.

In addition to this introduction, this document’s structure includes 5 sections as follows. Section \ref{section:literature} contains a review on literature on optimal inventory management and deep learning, Section \ref{section:method} introduces notation and describes the MQ-DRNN-s model. Section \ref{section:experiments} explores the performance of the model on the task of peak demand prediction. 
Finally Section \ref{section:conclusion} comments conclusions about this work.

The contribution of this work is framed in the literature of time series forecasting using machine learning and contributes to three specific aspects.

\begin{enumerate}
    \item \textbf{Non-parametric quantile estimation:} A model is proposed to non-parametrically forecast multiple quantiles simultaneously from a set of time series. This is achieved through the quantile loss.
    \item \textbf{Temporal scaling technique:} The model itself handles the necessary processing of the data to obtain better results than statistical models. This characteristic facilitates its use in production environments.
    \item \textbf{Forecasting comparative analysis:} Finally, we provide a comparison with statistical models and models similar to ours without the inclusion of our proposed preprocessing. Our proposed model outperformed classical statistical benchmarks by 3.2\%. And the naively implemented MQ-DRNN without scaling by 6\%.
\end{enumerate}

\section{Notation}

In this section, we briefly introduce the notation used throughout the work. We denote vectors in bold and scalars in non-bold. A time-series vector is denoted by $\vy$, while a particular value for the observation at time $t$ is denoted by $y_t$. The subscript $ T$ denotes the last observation in the time series; that is, $y_T$. $H$ gives the prediction horizon; meanwhile, we denote by $\hat{y}_{T+h}$ the forecasts associated with this horizon, where $1 \leq h \leq H$ and by $\hat{y}^{(q)}_{T+h}$ the forecast of the quantile $q$ at $T+h$; the actual values are denoted by $y_{T+h}$ and $y^{(q)}_{T+h}$ respectively. We use the subscript $i$ when it is necessary to denote different time series, i.e., $\vy_i$. Finally, we denote the set of natural numbers less than or equal to $d \in \mathbb{N}$ as $[d]$.

\section{Optimal Inventory Managment}

In time series forecasting, it is standard to estimate the conditional expectation of the time series in the future. Although it is a widespread practice, there are scenarios where it is convenient to generate forecasts above or below the mean value. Quantile forecasting addresses the generation of such scenarios. Theoretically, 
the problem of the optimal inventory allocation has been explored in the \emph{News Vendor Problem} from operations research literature \citep{qin2011newsvendor_review}. Its solution justifies the choice of a quantile; and states that maximizing News Vendor profits is given by choosing a quantile.

As described by \cite{qin2011newsvendor_review}, the newsvendor problem is an excellent framework to work on inventory-related problems. At its most basic setting, the problem involves three economic agents: the supplier, the buyer (the newsvendor), and the consumers, interacting within a single period. 

This problem focuses on determining an optimal stock quantity $y^s$ of a single good (newspapers) for the newsvendor to buy from the supplier at a given cost $v > 0$. From the consumers' side, their demanded quantity is given by the random variable $Y$ (with density and distribution functions $f_Y$ and $F_Y$, respectively), at an exogenous price $p > v$. The supplier side assumes that there are no capacity restrictions and zero transactional costs. During the single period, the quantity $y^s$, a priori chosen by the newsvendor, might exceed or fall behind the realization of $Y$.

Finally, if the stocked quantity $y^s$ exceeds the quantity $y^d$ demanded by the consumers, the remaining units have a unitary recovery cost of $g < v$; if the opposite happens, then each unit of demand not met has a cost $B > 0$ for the buyer.

Thus, the profits function that reflects the problem the buyer faces is 

\begin{align}
\begin{split}
\Pi(y^s,y^d) &= \left(py^d - vy^s + g(y^s - y^d)\right)\mathbbm{1}_{\{y^d \leq y^s\}} \\
&+ \left(py^s - vy^s - B(y^d - y^s)\right)\mathbbm{1}_{\{y^d > y^s\}}.
\end{split}
\end{align}

Given that the quantity $y^s$ is chosen by the buyer prior to observing $y^d$, the buyer maximizes 

\begin{align}
\begin{split}
    \mathbb{E}[\Pi(y^s)] =& \int_{0}^{y^s} \left(py^d - vy^s + g(y^s - y^d)\right)f_Y(y^d)dy^d  \\
    & +\int_{y^s}^{\infty} \left(py^s - vy^s - B(y^d - y^s)\right)f_Y(y^d)dy^d.
\end{split}
\end{align}

Given $\mathbb{E}_Y = \mu$, the previous equation reduces to 
\begin{align}
\begin{split}
    \mathbb{E}[\Pi(y^s)] =& (p-g)\mu - (v-g)y^s \\
&- (p-g+B) \int_{y^s}^{\infty}(y^d-y^s)f_Y(y^d)dy^d.    
\end{split}
\end{align}

Therefore, the first order conditions needed for an optimal allocation satisfy

\begin{align}
\frac{d}{dy^s}\mathbb{E}[\Pi(y^s)] &= (p - v + B) - (p - g + B)F_Y(y^s), \\
\frac{d^2}{d{y^s}^2}\mathbb{E}[\Pi(y^s)] &= -(p - g + B)f_Y(y^s) < 0.
\end{align}

Thus, $\mathbb{E}[\Pi(y^s)]$ reaches its global maximum at ${y^s}^* = F_Y^{-1}\left(\frac{p-v+B}{p-g+B}\right)$ (Theorem \ref{th:qloss_opt}, Target quantile minimizes the quantile loss). The above shows the existence of a unique optimal quantile $y_q^*= {y^s}^*$ from the consumers' demand density function that maximizes the buyer's expected profits. Such quantile $q$ accumulates $\frac{p-v+B}{p-g+B}$ from $F_Y$, which represents the ratio between the costs the buyer faces when exceeding or not the consumers' demand.

Therefore, companies can search for the quantile that maximizes their profits. Although there are methods to approximate the value of the quantile based on unit costs of overestimation and underestimation \citep{quantile_assesments}, in this work, we will assume that this quantile is known. Thus the \emph{News Vendor Problem} justifies the existence of an optimal quantile that will serve as input for the time series forecast for that quantile.

\section{Deep Learning Models}

\subsection{Universal Approximation Theorem}

Probabilistic time series forecasting consists of finding a conditional probability on covariates for a definite horizon. In practice, finding this probability function is not feasible. For this purpose, it is necessary to have a sufficiently flexible family of function approximators with sufficient accuracy. In particular, neural networks have good properties that allow estimating this function.

Aside from excellent empirical results, neural networks happen to also have interesting theoretical guarantees that states the universality of its approximation. The Universal Approximation Theorem
described in the Appendix (Theorem \ref{the:nn_univ}) 
states that any Lipschitz function 
(Definition \ref{def:lips}) 
can be approximated by a 3-layer Neural Network 
(Definition \ref{def:nn}). Although we present this result, there are versions of the theorem that prove a general version. See for example \cite{kratsios2020} and \cite{park2020minimum}.

In particular for \emph{Optimal Inventory Management}, sufficiently flexible and powerful approximators will allow to have excellent forecasting accuracy. We complete this theoretical guarantees with the empirical validation from Section \ref{section:experiments}.

\subsection{Neural Forecasting}

Beyond the fully connected neural networks that were proven universal approximators, in recent years, the use of deep learning for time series forecasting has become widespread. Since the victory of the hybrid ESRNN model in the M4 competition \citep{smyl2019esrnn}, fundamental advances in the area have emerged. This section began by recapping the relevant work starting with Recurrent Neural Networks and advances made in probabilistic modeling.


\emph{Recurrent Neural Networks} use hidden states propagated in time (RNN; \cite{elman90rnn, werbos1990backprop_through_time}). The hidden states work as encoders of the previous observations, which allows the model to be able to memorize relevant information from the series; thanks to this feature, this model has been used in different contexts of \emph{Sequence-to-Sequence models} (Seq2Seq) such as applied to natural language processing \citep{graves2013seq2seq, hermans2013DRNN} and machine translation \citep{sutskever2014seq2seq_translation, bahdanau2016neural}. Its widespread use has allowed an important development in the variants that have emerged; such is the case of new training techniques and designs that have proven to be functional as \emph{Long Short Term Memory} (LSTM; \cite{gers1999lstm}) and the \emph{Gated Recurrent Units} (GRU; \cite{chung2014GRU}).

One of the major milestones and precursors of recent advances in sequential modeling has been developing techniques to learn deeper representations through longer memory. This advance has been made possible by the use of convolutions and skip-connections within the recurrent structures. Such is the case of \emph{Temporal Convolutional Networks} (TCN; \cite{zico2018tcnn}) as well as \emph{Dilated RNN} (DRNN; \cite{chang2017dilatedRNN}), and \emph{WaveNet} for audio generation and machine translation \citep{vandenoord2016wavenet, dauphin2016cnn_word_modeling, kalchbrenner2016cnn_neural_translation}.


As pointed out in \cite{benidis2020dl_timeseries_review2}, the research done in fields such as machine translation and audio generation, framed in the sequence-to-sequence models literature, has been transferred to time series modeling with great success. This achievement is due to the ability of these models to learn complex temporal dependencies. Within the time series literature, there are models such as the \emph{Multi-Quantile Convolutional Neural Network} (MQCNN; \cite{wen2017mqrcnn}), the \emph{Exponential Smoothing Recurrent Neural Network} (ESRNN; \cite{smyl2019esrnn}), or the \emph{Neural Basis Expansion Analysis} (NBEATS; \cite{oreshkin2019nbeats}) and its exogenous version (NBEATS-x; \cite{olivares2021nbeatsx}).

An area of particular interest in time series forecasting with machine learning techniques has been probabilistic output. In this field, the aim is to know the conditional distribution of the future values of the time series and not only its expected value \citep{benidis2020dl_timeseries_review2}. There are two approaches to the subject. The first one assumes a parametric form of the output distribution such as the \emph{Autoregressive Recurrent Networks} (DeepAR; \cite{flunkert2017deepAR}) or the \emph{Associative and Recurrent Mixture Density Networks} (ARMDN; \cite{mukherjee2018armdn}). On the other hand, there is literature studying nonparametric models; these models produce quantiles directly by minimizing the quantile loss \citep{koenker_bassett_1978}. 

Although academia has resisted the widespread adoption of neural networks for time series forecasting \citep{makriadakis2018concerns}, these models have become state-of-the-art in many areas. For example, in weather forecasting \citep{nascimento2019weather}, energy markets \citep{lago2018DNN, dudek2020hybrid} and large online retailers \citep{flunkert2017deepAR, wen2017mqrcnn}. This performance has made its use more and more constant by the research community \citep{langkvist2014dl_timeseries_review1, benidis2020dl_timeseries_review2}.







\section{Quantile Regression}

\subsection{Quantile Loss}

The newsvendor problem states that profit maximization is reached at an optimal quantile of the distribution of the demanded quantity. In business scenarios where this distribution is unknown, the quantile $q$ can be heuristically defined. The theory developed by \cite{koenker_bassett_1978} on quantile regression helps us to generate forecasts for such quantiles. Thus, following the notation by \cite{wen2017mqrcnn}, models can be built to predict the conditional quantile 

\begin{equation}
\label{eq:cond_quantile}
y^{(q)}_{T+h}|y_T,
\end{equation}

of the target distribution given by 
\begin{equation}
\label{eq:dist_object}
\mathbb{P}(y_{T+h} \leq y^{(q)}_{T+h}|y_T) = q,
\end{equation}
seeking to minimize the quantile loss, given by 

\begin{equation}
\label{eq:quantile_loss}
\begin{split}
    QL_{q}\left(y_{T+h}, \hat{y}_{T+h}^{(q)}\right) =& q\max\left(0, y_{T+h}-\hat{y}^{(q)}_{T+h}\right) + \\
    &(1-q)\max\left(0, \hat{y}^{(q)}_{T+h}-y_{T+h}\right),
\end{split}
\end{equation}

where $q$ is the quantile to be forecasted.

As shown in Theorem \ref{th:qloss_opt} (Target quantile minimizes the quantile loss), the \emph{quantile loss} reaches its minimum at exactly the quantile $q$. Moreover, if $q = 0.5$, the Mean Absolute Error is recovered. Figure \ref{fig:quantile_loss} visually recovers this metric; the slope of the plot reflects the desired imbalance by overestimating or underestimating. The \emph{quantile loss} over the entire forecast horizon $H$ is calculated arithmetically, averaging the quantile losses of the individual forecasts.

\begin{figure}[htpb]
\begin{center}
\includegraphics[scale=0.25]{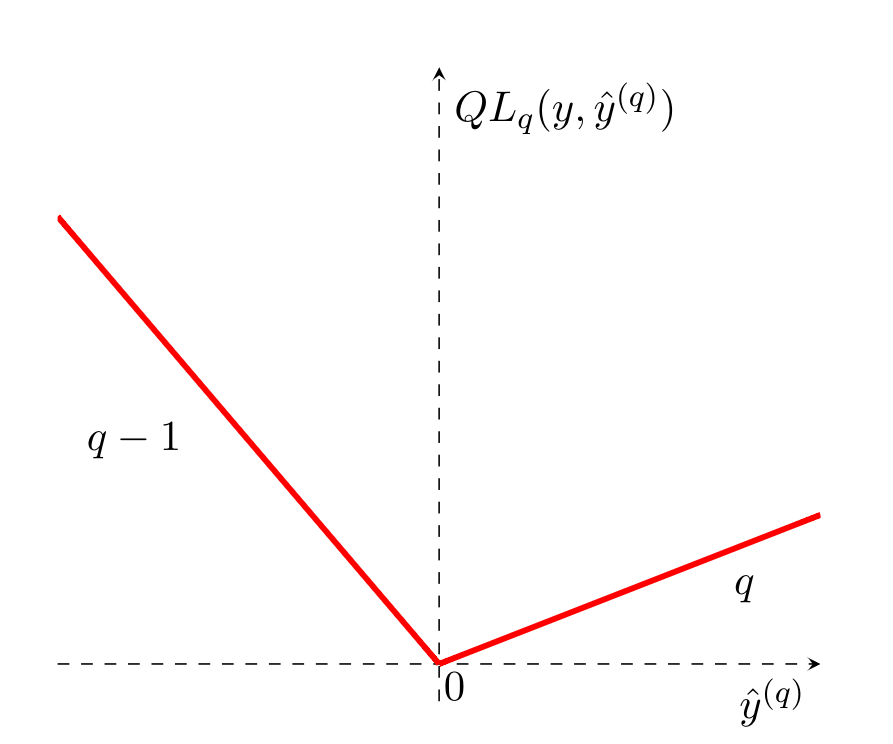}
\end{center}


\captionsetup{font=small}
    \caption{The Figure shows a plot of the \emph{quantile loss}. As can be seen, this loss allows to estimate a particular quantile of interest by selecting the parameter $q$. This quantile gives the slope of the loss.}
    \label{fig:quantile_loss}
\end{figure}

Finally, it is useful to calculate the sampling probability of the quantile forecasts as a measure of calibration. That is, the percentage of times that the actual value is below the quantile forecast. This metric is given by
\begin{equation}
\label{eq:calibration}
CL_{q}(\vy, \hat{\vy}^{(q)}) = \frac{1}{H} \sum^H_{h=1} 1_{\{y_{T+h} \leq \hat{y}_{T+h}^{(q)}\}}.
\end{equation}

The forecasts are expected to be well-calibrated, i.e., the difference between the percentage of times that the actual value is below the quantile forecast and the quantile is small. Formally, the equation

\begin{equation}
\label{eq:calibration_diff}
|CL_q(\vy, \hat{\vy}^{(q)}) - q| < \varepsilon,
\end{equation}

is verified for some $\varepsilon > 0$ and small. However, as observed by \cite{quantile_assesments} strategies can be defined where perfect calibration is approximated asymptotically without information of the process $y$.

\subsection{Quantile Autoregression}

The idea of robust regressions is ancient, as pointed out in \cite{Boscovich_Simpson1760quantile_regression} potentially as early as 1760. Subsequently, the framework of quantile regression was developed primarily in \cite{koenker_bassett_1978}; in their work, they extend the use of least squares to linearly estimate the conditional mean of $Y$ given $X$ for the quantile case.
Instead of minimizing the mean of the quadratic errors, the authors develop the idea of minimizing the mean of the quantile loss.
\cite{tata_sushanini_2012} reviews statistical characteristics of the model.



In time series, \emph{quantile autoregression with covariates} (\emph{QAR-X}) seeks to model the quantiles through a linear regression where the exogenous variables are specific lags of the time series. The QAR-X is a local model; it is defined by

\begin{equation}
\label{eq:qarx}
    y_{t}^{(q)} = \beta_0 + \sum_{l \in L} \beta_l y_{t -l} + \mathbb{\alpha} \vx,
\end{equation}

for each quantile $q$ and for each time series, where $L$ is a set of lags, and $\vx$ is a vector of exogenous variables, e.g., trend or day of the week. The forecast can be generated by a \emph{Seasonal Naive} (\ref{eq:seasonal_naive}) of the fitted values or in a recursive manner\footnote{Using one-step ahead forecasts iteratively to forecast a horizon greater than one is known as \textit{teacher forcing}.}. It consists of forecasting the next value and using this recursively to generate the next one. This method is characterized by accumulating errors \citep{benidis2020dl_timeseries_review2}. Finally, minimizing the quantile loss finds the best quantile regression model.

This model has been widely used for problems in different areas. In finance, it has been used mainly for portfolio construction to minimize the risk, encoded in a selected quantile $q$ \citep{xiao_koenker_2009,allen_powell_2011,lingjie_pohlman_2008}. Similarly, it has been applied in supply chain logistics for demand forecasting \citep{Bruzda2020} and in electricity-load to avoid blackouts \citep{elamin_2018}.
 \label{section:literature}
\section{Dilated Recurrent Neural Network}

\emph{Recurrent Neural Networks} (RNN) were first proposed and popularized in the work of \cite{rumelhart1986, elman90rnn,WERBOS1988339}. The fundamental idea of this architecture consists of connecting the hidden units of the neural network with themselves in a previous period. In each time step $t+1$, the network receives two inputs: some external value corresponding to that time step $t+1$ and the output of the hidden units in the previous time step $t$ \citep{elman90rnn}. 
Figure \ref{fig:rnn} describes the architecture of this model. All timesteps share the network weights. 

\begin{figure}[!htpb]
\centering     
\includegraphics[width=\textwidth]{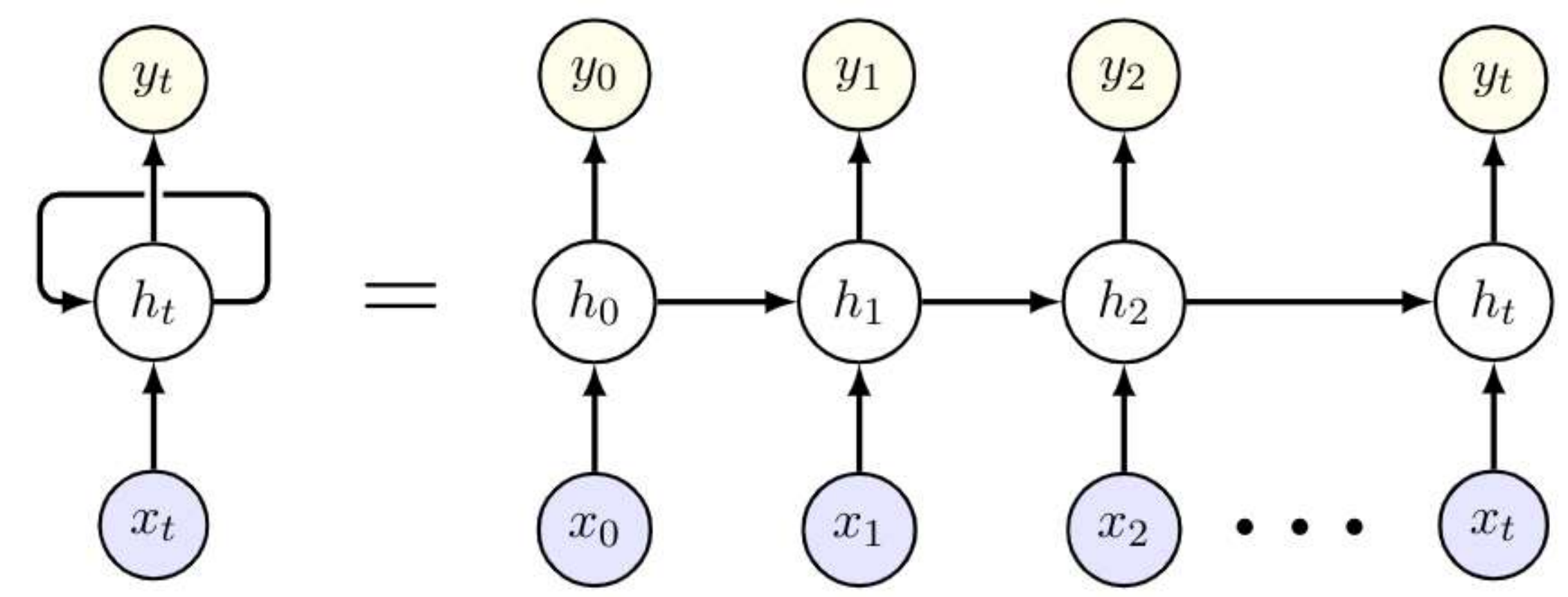}
\captionsetup{font=small}
\caption{The Figure shows the architecture of a Recurrent Neural Network (RNN). The network receives an external input $x_t$ and the output of the previous hidden unit $h_{t-1}$ to produce the output $y_t$. }
\label{fig:rnn}
\end{figure}



Although RNNs have shown great success in different applications, one of their main problems is handling very long sequences \citep{chang2017dilatedRNN}. In particular, RNNs present problems handling very long term dependencies in conjunction with short and medium-term dependencies; also, the training of this architecture presents vanishing and exploiting gradient and back-propagation through-time problems; finally, this back-propagation algorithm increases the training time \citep{pascanu2013difficulty}.

We use the \emph{Dilated Recurrent Neural Network} (DRNN; \cite{chang2017dilatedRNN}) model in our method, shown in Figure \ref{fig:dnn}. In this architecture the hidden unit at $t+1$ no longer depends on the hidden unit at $t$ but on the hidden unit at $t+1-l$ where $l > 1$ is the dilation size. This dilation size can be different for each of the hidden layers\footnote{Observe that if $l=1$ for each of the hidden layers, the RNN is recovered.}. This architecture alleviates gradient problems, improves computational time, and better learns long-term dependencies.

\begin{figure}[!htpb]
\centering     
\includegraphics[width=\textwidth]{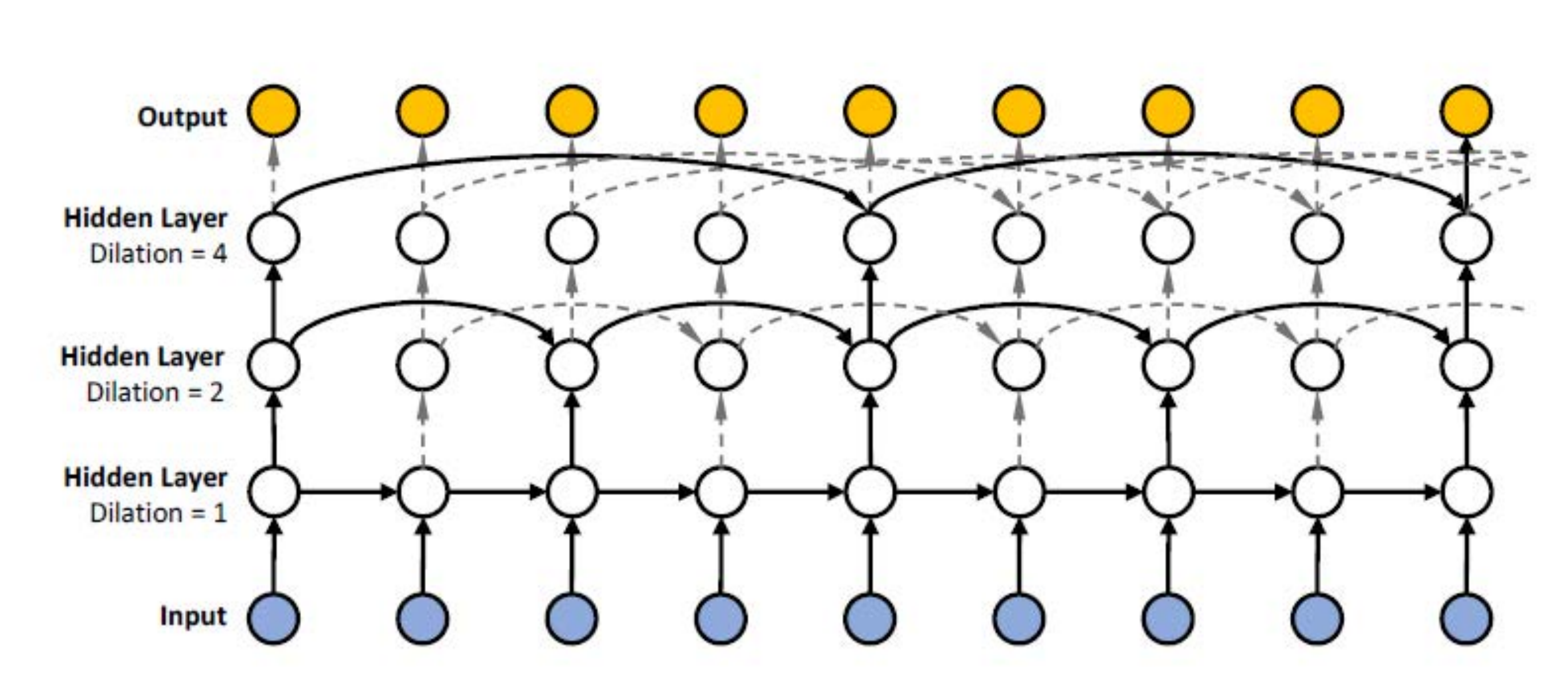}
\captionsetup{font=small}
\caption{The Figure shows the architecture of the \emph{Dilated Recurrent Neural Network} (DRNN). For each hidden layer, the size of the dilation changes. The hidden units of each layer at $t+1$ depend on the hidden units of the same layer at $t+1-l$ where $l$ is the dilation size. 
}
\label{fig:dnn}
\end{figure}

\section{Temporal Scaling}

One of the main inspirations for the model we propose is the \emph{Exponential Smoothing-Recurrent Neural Networks} model (ESRNN; \cite{smyl2019esrnn}). This model is hybrid: it scales the time series locally (for each time series) using \emph{Exponential Smoothing} (ES; \cite{HYNDMAN2002439}) (\ref{eq:exp_smooth}), decomposing it into levels and seasonality, to model the shared trends of all series using the RNN. This model turned out to be the winner of the fourth version of the Makridakis competitions (M4; \cite{makridakis2018m4competition_results}).

Scaling the time series before using the RNN is a crucial part of the ESRNN algorithm. In our proposal, we take up this idea. We process the time series locally (for each time series itself) in our model using the median. We then model the median residuals globally (the whole residuals of the time series together) with a DRNN. Finally, the model constructs the forecasts by adding the predicted residuals to the median signal of the time series.

Formally, the time series is given by $y_t$. We get the level of each time series trough its median and it is denoted by $l_t$ (\ref{eq:mqrnn1}). The difference between the level and the series is given by $z_t$ (\ref{eq:mqrnn2}). We use a DRNN to model the residuals $z_t$ jointly. The DRNN receives as arguments the residuals $z_t$, temporal exogenous variables denoted by $x_t$, and static exogenous variables denote by $s_t$ (\ref{eq:mqrnn3}). The model predicts the level using the Naive method (\ref{eq:naive}), that is, with the last observed value of the same level (\ref{eq:mqrnn4}). Finally, the model computes the forecasts by adding the predicted level of the series with the outputs of the DRNN (\ref{eq:mqrnn5}).

\begin{align}
    \text{Level:}       & \quad l_{t} = \text{median}(y_{t}) \label{eq:mqrnn1} \\
    \text{Residual:}    & \quad z_t = y_t - l_t \label{eq:mqrnn2} \\
    \text{NN Forecast:} & \quad \hat{z}_t = \text{DRNN}(z_t, x_t, s_t) \label{eq:mqrnn3}\\
    \text{Level Forecast:}       & \quad \hat{l}_{t} = \text{Naive}(l_{t}) \label{eq:mqrnn4} \\
    \text{Forecast:}   & \quad \hat{y}_{T+h} = \hat{l}_{T+h} + \hat{z}_{T+h} \label{eq:mqrnn5} 
\end{align}

\section{Multi-Quantile Loss}

Machine learning models such as the DRNN we use in our proposed model can generate multiple outputs at the same time \cite{xu2019survey}. The above allows us to generate multiple quantiles for the same model. This ability is achieved through the use of \emph{multi-quantile loss} 

\begin{equation}
\label{eq:mqloss}
    MQL_Q\left(\vy, \left(\hat{\vy}^{(q)}\right)_{q \in Q}\right) = \frac{1}{H|Q|} \sum_{q \in Q} \sum^{H}_{h=1}QL_{q}\left(y_{T+h}, \hat{y}_{T+h}^{(q)}\right),
\end{equation}

where $H$ is the forecast horizon, and $Q$ is the set of quantiles \citep{wen2017mqrcnn}. To calculate the \emph{multi-quantile loss}, the quantile loss (Equation (\ref{eq:quantile_loss})) is arithmetically averaged for each forecast window and each time series for multiple horizons and multiple quantiles.

As can be observed, the quantile loss $QL_q$ is not comparable between time series because the error is calculated locally, i.e., for each of the time series; in particular, this problem arises with product demand data where each of them may exhibit different levels from each other. Finally, the described metric is calculated for a set of time series by simply calculating the arithmetic average of the loss of each time series.




 \label{section:method}
\section{El Globo Demand Dataset}

In order to evaluate our proposed model, we selected a sample of products from one of the largest companies in Mexico, El Globo. This sample consists of daily sales of 10,000 products selected for having the highest number of sales during 2019—the time series range from 2018-04-05 to 2020-02-26. As data preprocessing, missing values (days without sales) were replaced with zeros to complete the time series (for a total of 693 observations per product). 

Also, the proposed model is capable of receiving exogenous variables and static variables (for each time series). The day of the week was chosen as the exogenous variable, while the static variables correspond to the sales center and the product family. The sales center corresponds to the establishment where the purchase was made. The product family corresponds to the product category.

\section{Training Methodology}

\label{sec:training}

We divide the data into training, validation, and test sets, as shown in Figure \ref{fig:training-meth}, to train the proposed model on the dataset described above. The training set spans from April 5, 2018, to November 28, 2018; the validation set from November 29, 2018, to February 20, 2019; finally, the test set from February 21, 2019, to February 26, 2020 (corresponding to 53 weeks). The $.3$, $.5$, $.7$ and $.9$ quantiles (correspondingly the $30$, $50$, $70$ and $90$ percentiles) were chosen as they were the ones requested by the operation owning the dataset \footnote{The operations managers were interested in a symmetric interval around the median covering the 40\% of cases and an upper case, the 90 percentile.}.

The optimal hyperparameters of the model were chosen by observing those that minimized the \emph{multi-quantile loss} ($MQL$) in the validation set. The validation set forecasts were generated with a rolling window each week with a 7-day horizon, considering the task horizon. This technique is known as time series cross-validation \citep{hyndman_book}. To carry out the hyperparamter optimization, we used a Bayesian iterative algorithm (HyperOpt; \cite{bergstra2013hyperopt}) using \textit{python} over the validation set. Table \ref{table:hp_space} lists the hyperparameter space over which the optimization was performed while Table \ref{table:hpoptimal} lists the optimal set of hyperparameters.

\begin{figure}[!htpb]
\centering     
\includegraphics[width=\textwidth]{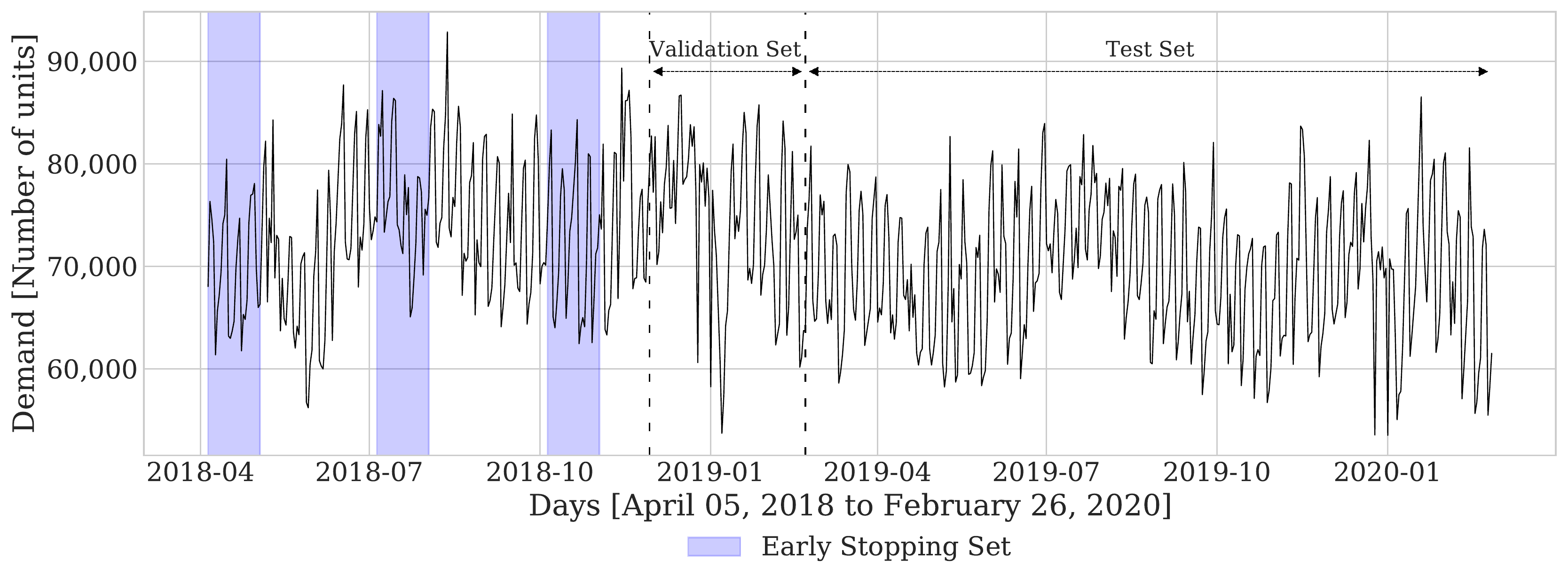}
\captionsetup{font=small}
\caption{Representation of train, validation and test sets. The validation loss provides the signal for hyperparameter optimization. Validation forecasts were calculated using 7-day-ahead rolling windows. Early stopping sets were selected at random (highlighted windows) during the test phase; it consists of three windows of four weeks each. Forecasts for test set were constructed using 7-day-ahead rolling windows.}
\label{fig:training-meth}
\end{figure}

The test set consists of forecasting an entire year, divided into 53 weeks. Forecasts were generated each week with a 7-day horizon to reflect the operational performance that many companies follow for forecast generation. For each week of the test set, the model was retrained with the previous data and the optimal hyperparameters found using the validation set. We used early stopping using three randomly selected windows of four weeks each as regularization. Figure \ref{fig:training-meth} recovers this procedure visually. Finally, the final forecasts were obtained by ensembling with the median ten instances of the trained model (parameters were randomly initialized in each instance controlled by a seed). 

\begin{table}[tbp]
\centering
\resizebox{\textwidth}{!}{\begin{tabular}{llc}
\hline
\multicolumn{1}{c}{\textbf{Hyperparameter}} & \multicolumn{1}{c}{\textbf{Description}} & \multicolumn{1}{c}{\textbf{Considered Values}}  \\
\hline
\multicolumn{3}{c}{\textbf{{Architecture Parameters}}} \\
\hline
input\_size\_multiplier & Input size is a multiple of the output horizon. & [4] \\
output\_size & Output size is the forecast horizon for day ahead forecasting. & [7] \\
add\_nl\_layer & Insert a tanh layer between the RNN stack and the output layer. & [True, False] \\
cell\_type & Type of RNN cell. & [GRU, RNN, LSTM, ResLSTM]\\
dilations & Each list represents one chunk of Dilated LSTMS. & [[[1, 2], [4, 8]], [[1, 2]]]\\
state\_hsize & Dimension of hidden state of the RNN. & range(10, 100)\\
\hline
\multicolumn{3}{c}{\textbf{{Optimization and Regularization parameters}}} \\
\hline
learning\_rate & Initial learning rate for regression problem. & range(5e-4,1e-3) \\
lr\_decay & The decay constant allows large initial lr to escape local minima. & [0.5]\\
lr\_scheduler\_step\_size & Number of times the learning rate is halved during train. & [2] \\
per\_series\_lr\_multip & Multiplier for per-series parameters. & [1]\\
rnn\_weight\_decay &  Parameter of L2/Tikhonov regularization of the RNN parameters. & [0]\\
batch\_size  & The number of samples for each gradient step. & [32]  \\
n\_iterations & Maximum number of iterations of gradient descent. & [1,000] \\
early\_stopping & Consecutive iterations without validation loss improvement before stop. & [10] \\
gradient\_clipping\_threshold & Max norm of gradient vector. & range(10, 100)\\
noise\_std & Standard deviation of white noise added to input during training. & [0.001]\\
hyperopt\_iters & Number of iterations of hyperparameter search. & [50] \\
\hline
\multicolumn{3}{c}{\textbf{{Data Parameters}}} \\\hline
idx\_to\_sample\_freq & Step size between each window. & [7]\\
val\_to\_sample\_freq & Step size between each window for validation. & [7]\\
window\_sampling\_limit & Number of time windows included in the full dataset. & [all available] \\
\hline
\end{tabular}}
\captionsetup{font=small}
\caption{Hyperparameter search space for the MQ-DRNN and MQ-DRNN-s models. We select the hyperparameter set with the best performance in the validation set using the HyperOpt library.}
\label{table:hp_space}
\end{table}

\newpage
\section{Model pipeline}




Hyperparameter optimization was performed using a worker with 4 CPUs, 1 GPU and 15 GB of RAM. Forecast generation was performed in parallel using \textit{dask} \citep{dask} in \textit{python} on a cluster of 20 workers with the above configuration raised with \textit{coiled}{\footnote{Coiled is a cloud service developed to automate and scale tasks. \hyperlink{https://coiled.io/}{https://coiled.io/}}}. To perform parallel processing, we generated 53 corresponding partitions and uploaded the data to Amazon Web Services. Parallel processing reduced the computational time by more than 20 times. The full pipeline is described in Figure (\ref{fig:model-pipe}).

\begin{figure}[htpb]
\centering     
\includegraphics[width=\textwidth]{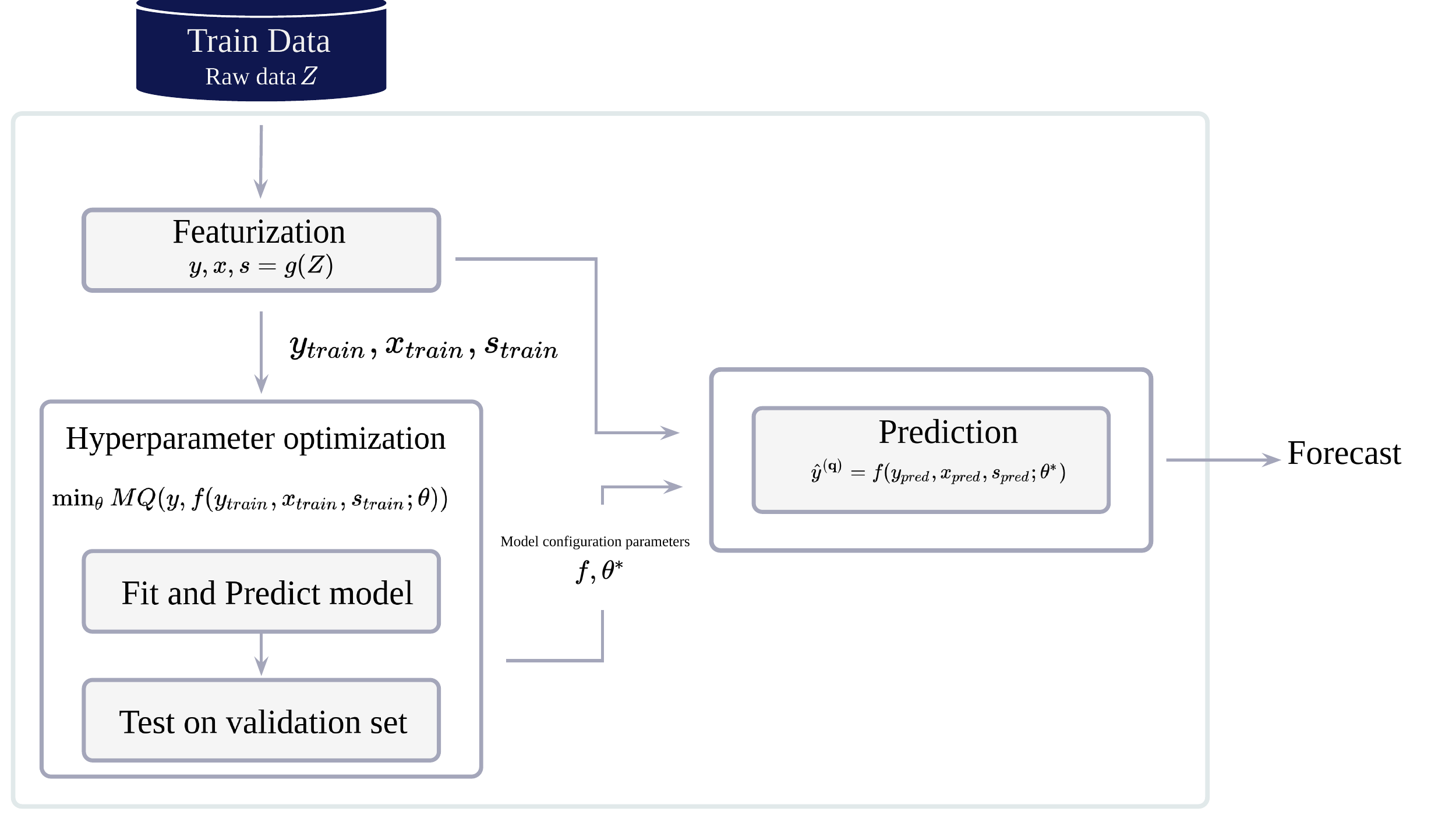}
\captionsetup{font=small}
\caption{The Figure describes the pipeline for obtaining the forecasts by the MQ-DRNN-s model. First, the set of 10,000 products $Z$ for which the forecast will be made is obtained. Then the objective variable $y$, the exogenous variable $x$ containing the day of the week, the static variables $s$ consisting of the sales center and the product family are obtained. Subsequently, hyperparameter optimization is performed on the validation set. With the optimal hyperparameters, the forecasts for each of the 53 weeks are generated in a rolling fashion.}
\label{fig:model-pipe}
\end{figure}

\newpage
\section{Main Results}

Table \ref{table:main_results2} shows the comparison between the \emph{QAR-X}, \emph{MQ-DRNN}, and \emph{MQ-DRNN-s} models on the test dataset. The first column shows the analyzed percentile. The QAR-X column shows the results associated with the autoregressive quantile model, while the MQ-DRNN column shows the results associated with the base model, and finally MQ-DRNN-s reports the results of our proposed model.

As can be seen, the proposed model is better for the evaluation metrics for the 30, 50, 70, and 90 percentiles (P30, P50, P70, P90). In the case of the 90 percentile, the models have comparable performance, with the QAR-X model being better by an insignificant difference.  The proposed model obtained a percentage improvement of 3.2\% compared to the QAR-X model and a percentage improvemente of 6\% compared to the MQ-DRNN model. Also, the models achieve adequate calibration for the percentiles analyzed as shown in Table \ref{table:main_results_cl}.

To test whether the results are statistically significant, we performed a 2- sided paired t-test following the practice in \citep{oreshkin2019nbeats}. The statistical test was performed for each QAR-X and MQ-DRNN model against our proposed model, MQ-DRNN-s. Table \ref{table:main_results_diff} shows the results. For each of the percentiles and each model, the difference is statistically significant at 99\%.



\begin{table}[!htpb]
\centering
\begin{tabular}{ccccc}
\multicolumn{4}{c}{Forecast Evaluation} \vspace{5mm}\\
\toprule
                        Percentile & QAR-X & MQ-DRNN & MQ-DRNN-s \\ \midrule
\multirow{1}{*}{P30}  &  1.15 & 1.17 & \textbf{1.11}  \\
\multirow{1}{*}{P50}  &  1.35 & 1.37 &\textbf{1.28}  \\ 
\multirow{1}{*}{P70}  &  1.21 & 1.24 & \textbf{1.16}  \\
\multirow{1}{*}{P90}  &  \textbf{0.64} & 0.70 & 0.66 \\ 
\bottomrule
\end{tabular}
\captionsetup{font=small}
\caption{Quantile forecast accuracy measures for 7-day-ahead demand during 53 weeks for 10,000 time series for models QAR-X, MQ-DRNN and MQ-DRNN-s. The reported metric is the \emph{quantile loss} (QL) at P30, P50, P70 and P90 percentile levels. Best results are highlighted in bold.}
\label{table:main_results2}
\end{table}

\begin{table}[!htpb]
\centering
\begin{tabular}{ccccc}
\multicolumn{4}{c}{Forecast Calibration} \vspace{5mm}\\
\toprule
                        Percentile & QAR-X & MQ-DRNN & MQ-DRNN-s \\ \midrule
\multirow{1}{*}{P30}   &  30.28 & 29.17 & 30.53 \\
\multirow{1}{*}{P50}   &  49.86 & 49.80 & 49.72 \\
\multirow{1}{*}{P70}   &  69.44 & 71.24 & 70.16 \\
\multirow{1}{*}{P90}   &  89.22 & 91.47 & 90.42 \\ 
\bottomrule
\end{tabular}
\captionsetup{font=small}
\caption{Calibration Metric (CL, \ref{eq:calibration}) for 7-day-ahead demand during 53 weeks for 10,000 time series for models QAR-X, MQ-DRNN and MQ-DRNN-s. The reported metric is the \emph{calibration} (CL) at P30, P50, P70 and P90 percentile levels. Qualitative analysis to verify that the forecasts are close to the predicted quantile.}
\label{table:main_results_cl}
\end{table}

\begin{table}[!htpb]
\centering
\begin{tabular}{ccc}
\multicolumn{3}{c}{Statistical performance comparison} \vspace{5mm}\\
\toprule
Percentile & QAR-X & MQ-DRNN \\ 
\midrule
\multirow{2}{*}{P30}  &  \textbf{-0.032} & \textbf{-0.051}   \\
                      & (0.00162) & (0.00161) \\
\multirow{2}{*}{P50}  &  \textbf{-0.069} & \textbf{-0.091} \\ 
                      & (0.00173) & (0.00182) \\
\multirow{2}{*}{P70}  &  \textbf{-0.045} & \textbf{-0.079} \\
                      & (0.00172) & (0.00193) \\
\multirow{2}{*}{P90}  &  \textbf{0.016} & \textbf{-0.047} \\
                      & (0.00152) & (0.00191) \\
\bottomrule
\end{tabular}
\captionsetup{font=small}
\caption{Average quantile loss difference against MQ-DRNN-s, computed as MQ-DRNN-s - QAR-X and MQ-DRNN-s - MQ-DRNN for P30, P50, P70 and P90 percentile levels. Standard error of the mean displayed in parenthesis. Bold entries are significant at the 99\% level (2-sided paired t-test).}
\label{table:main_results_diff}
\end{table}


Figures \ref{fig:forecasts_comparison_glb_closer}, \ref{fig:forecasts_comparison_glb_closer_2} and \ref{fig:forecasts_comparison_glb_closer_3} show the graphical results of the proposed model's forecasts for the whole year. It includes three time series in which we observe the difference in level for each of the forecasted percentiles and their ascending order. As an additional feature, we observe that the forecasts of all quantiles adapt to the behavior of the time series.

\newpage

\begin{figure}[!htpb]
\centering     
\includegraphics[width=\textwidth]{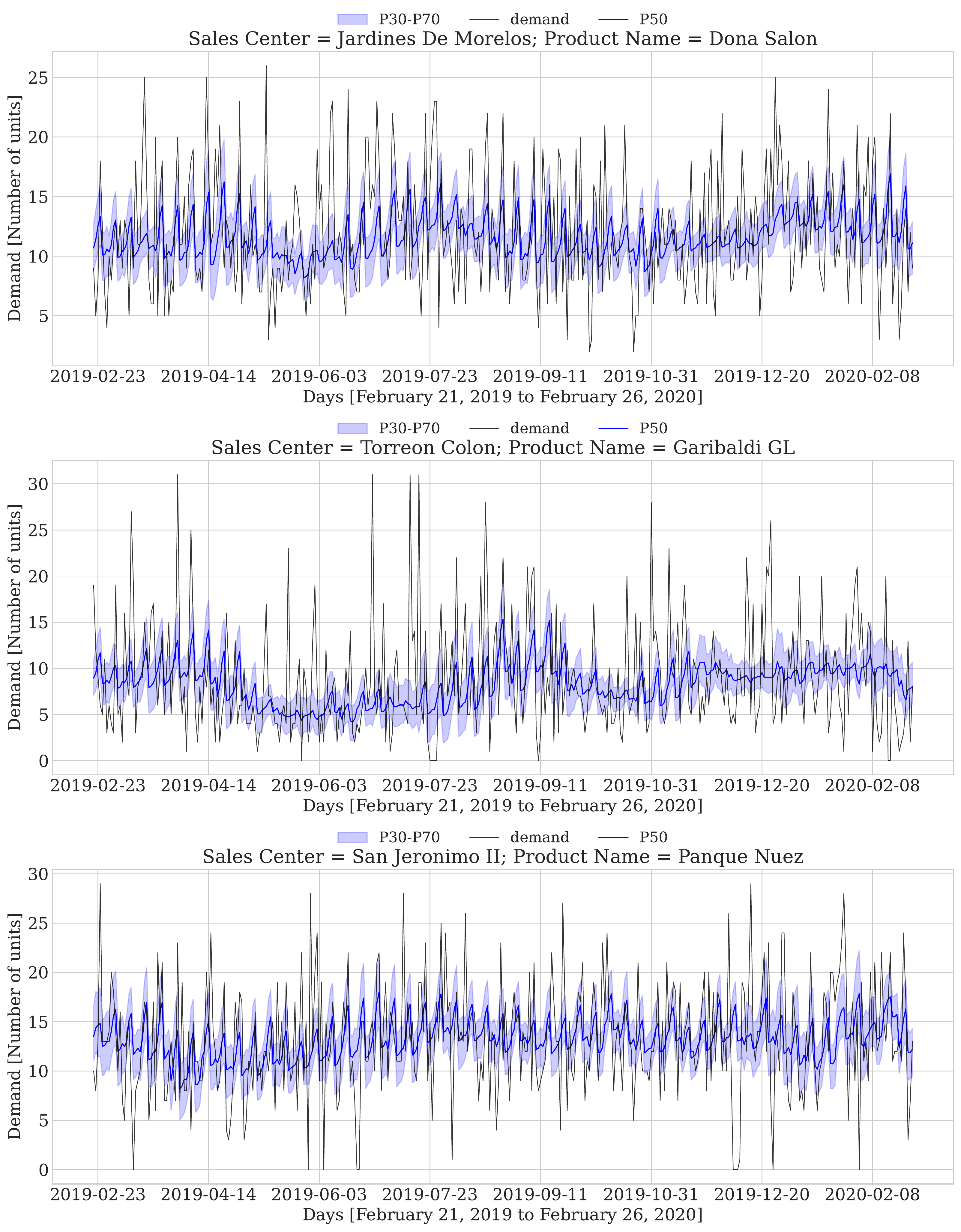}
\captionsetup{font=footnotesize}
\caption{The Figure shows the quantile forecasts of the MQ-DRNN-s model for percentiles P30, P50, P70, and true demand $y$ for 3 sample products from the dataset.  Finally the plot shows 53 sets of forecasts calculated each week.}
\label{fig:forecasts_comparison_glb_closer}
\end{figure}

\begin{figure}[!htpb]
\centering     
\includegraphics[width=\textwidth]{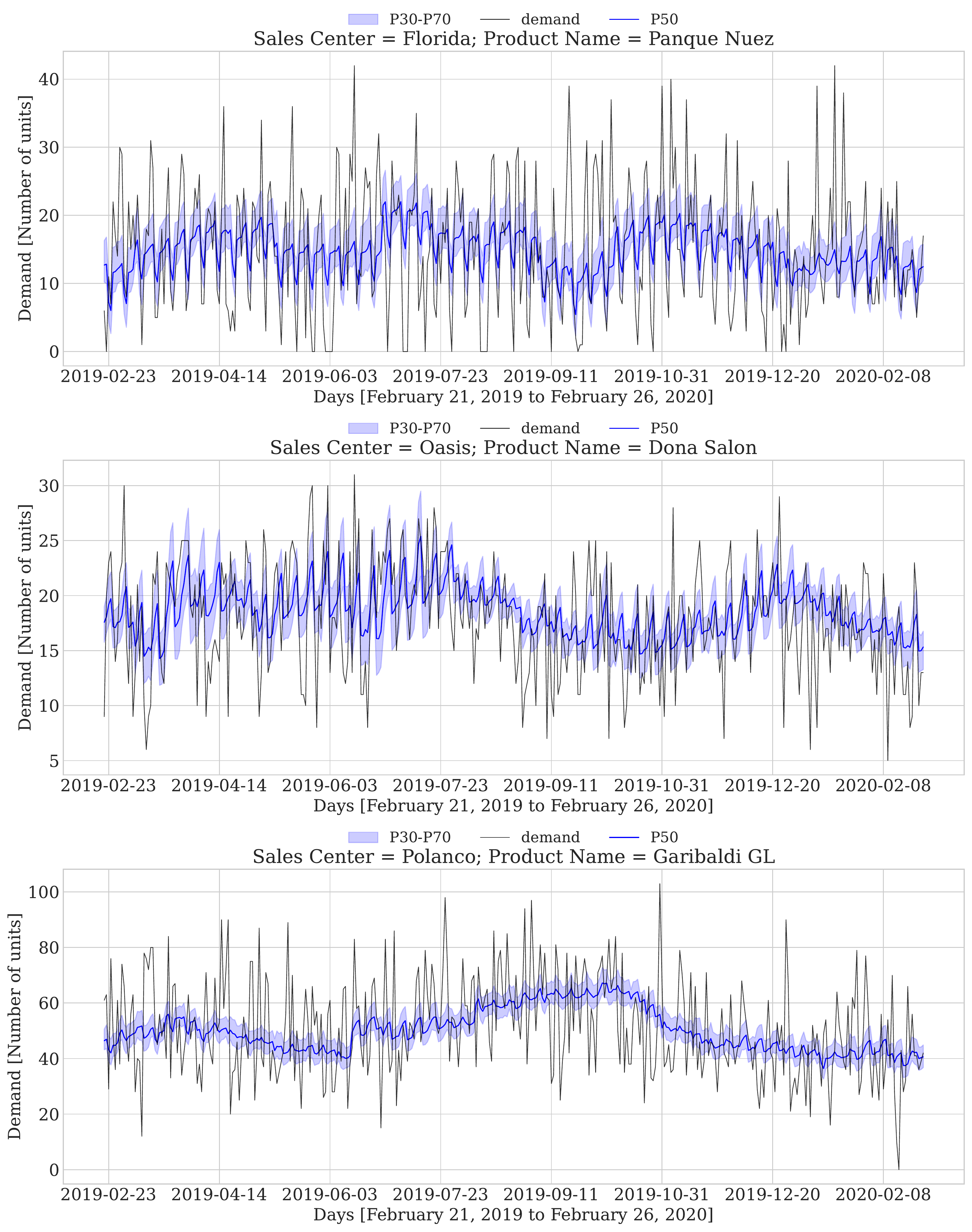}
\captionsetup{font=footnotesize}
\caption{The Figure shows the quantile forecasts of the MQ-DRNN-s model for percentiles P30, P50, P70, and true demand $y$ for 3 sample products from the dataset.  Finally the plot shows 53 sets of forecasts calculated each week.}
\label{fig:forecasts_comparison_glb_closer_2}
\end{figure}

\begin{figure}[!htpb]
\centering     
\includegraphics[width=\textwidth]{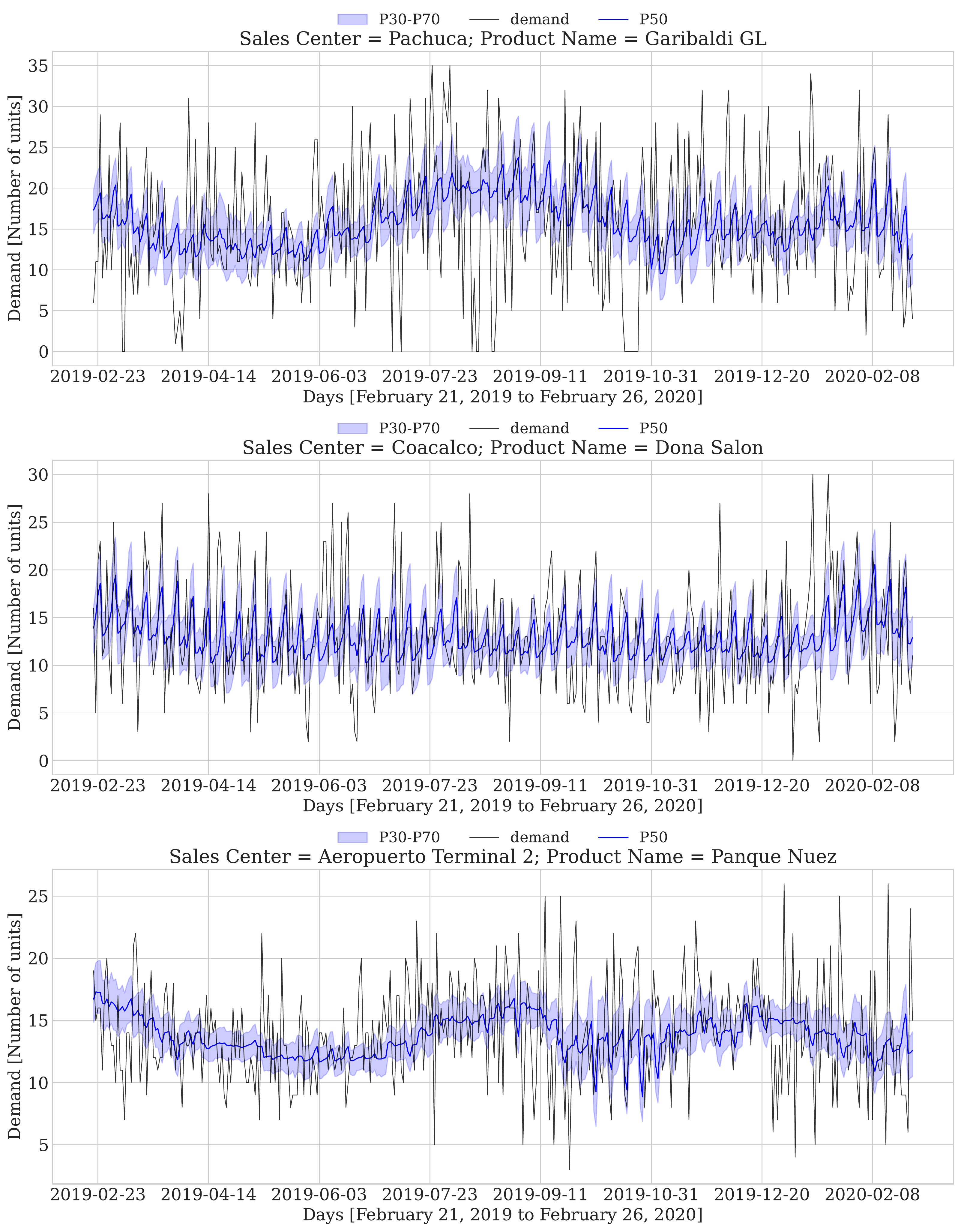}
\captionsetup{font=footnotesize}
\caption{The Figure shows the quantile forecasts of the MQ-DRNN-s model for percentiles P30, P50, P70, and true demand $y$ for 3 sample products from the dataset.  Finally the plot shows 53 sets of forecasts calculated each week.}
\label{fig:forecasts_comparison_glb_closer_3}
\end{figure}

 \label{section:experiments}
\section{Conclusion}

In this work we explore temporal scaling for multi-quantile time series forecasting using Recurrent Neural Networks (MQ-DRNN-s). This model showed a better performance of up to 3.1\% over an statistical benchmark (the quantile autoregressive model with exogenous variables, QAR-X) being this one better than the MQRNN without temporal scaling by 6\%. This results were found on a set of 10,000 time series of sales of one of the most important companies in Mexico over a 53-week horizon in a rolling fashion of 7-day ahead each week.

For companies seeking to make better decisions based on estimates of the future, the model proposed here offers three significant benefits: i) on the one hand, the data engineering required to put it into production is significantly reduced since an essential part of the data preprocessing occurs within the model, ii) the generation of multiple quantile scenarios is satisfied by training a single model; that is, the model does not have to be trained for each quantile, and finally, iii) the accuracy of the inferences generated over statistical baselines.

The results are consistent with the existing literature, demonstrating the remarkable capacity of machine learning models to forecast time series. These results also have significant business implications as this model's use would bring benefits in terms of accuracy and the generation of different scenarios.
 \label{section:conclusion}
\clearpage
\appendix





\section{Quantile Loss Optimality}

\begin{theorem}
\label{th:qloss_opt}
Let $q \in (0, 1)$ the target quantile, $y \in \R$ the target variable and  $\hat{y} \in \R$ the forecast. The quantile loss is defined by

$$
L_q(y, \hat{y}) = q \max(y - \hat{y}, 0) + (1 - q) \max(\hat{y} - y, 0).
$$
 
Then the minimum of the expected quantile loss, $
\hat{y}^* = \argmin_{\hat{y}} \mathop{\mathbb{E}_Y}[L_q(y, \hat{y})]
$ is the $q$ quantile, i.e.

$$
F_Y(\hat{y}^*) = q.
$$
\end{theorem}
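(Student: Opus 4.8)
The plan is to reduce this optimization to a single-variable calculus problem by writing the expected loss explicitly as a function of the forecast $\hat{y}$ and then differentiating. First I would set $g(\hat{y}) = \E_Y[L_q(Y,\hat{y})]$ and split the expectation at the kink $Y = \hat{y}$ according to which branch of the piecewise loss is active. Assuming $Y$ admits a density $f_Y$ with distribution function $F_Y$ — exactly the setting already used in the newsvendor derivation above — this gives
$$
g(\hat{y}) = q\int_{\hat{y}}^{\infty}(y-\hat{y})f_Y(y)\,dy + (1-q)\int_{-\infty}^{\hat{y}}(\hat{y}-y)f_Y(y)\,dy.
$$

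The second step is to differentiate $g$ with respect to $\hat{y}$. In both integrals the variable $\hat{y}$ appears simultaneously in a limit of integration and inside the integrand, so I would invoke the Leibniz integral rule. The boundary contributions vanish because the factors $(y-\hat{y})$ and $(\hat{y}-y)$ are zero at $y=\hat{y}$, leaving only the terms from differentiating inside each integral. This collapses neatly to
$$
g'(\hat{y}) = -q\bigl(1-F_Y(\hat{y})\bigr) + (1-q)F_Y(\hat{y}) = F_Y(\hat{y}) - q.
$$
Setting $g'(\hat{y}^*)=0$ then yields $F_Y(\hat{y}^*) = q$, which is the claimed characterization of the minimizer.

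To confirm the stationary point is a global minimum rather than a maximum or saddle, I would compute the second derivative $g''(\hat{y}) = f_Y(\hat{y}) \ge 0$, so $g$ is convex and the critical point is indeed its minimizer. Note that this also recovers the newsvendor solution ${y^s}^* = F_Y^{-1}\!\bigl(\tfrac{p-v+B}{p-g+B}\bigr)$ cited earlier, by identifying the target quantile $q$ with the cost ratio.

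I expect the principal obstacle to be rigorously justifying the differentiation under the integral sign, since $L_q(\cdot,\hat{y})$ is non-differentiable at $y=\hat{y}$; the expected loss $g$ is nonetheless smooth provided $F_Y$ is continuous, but making the interchange precise requires either a dominated-convergence argument or an appeal to Leibniz's rule under mild regularity on $f_Y$. An alternative that sidesteps differentiability altogether is to exploit the convexity of $\hat{y}\mapsto L_q(y,\hat{y})$ and characterize the minimizer through the subdifferential, verifying that $0 \in \partial g(\hat{y}^*)$ precisely when $\mathbb{P}(Y \le \hat{y}^*) = q$; this variant has the added benefit of covering distributions whose $F_Y$ has jumps and hence admits no density.
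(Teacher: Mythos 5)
Your proposal follows essentially the same route as the paper's proof: split the expectation at the kink $y=\hat{y}$, differentiate via the Leibniz rule (with the boundary terms vanishing), obtain the first-order condition $F_Y(\hat{y}^*) - q = 0$, and invoke convexity to conclude it is a global minimum. If anything, your treatment is slightly more careful than the paper's — you justify convexity explicitly via $g''(\hat{y}) = f_Y(\hat{y}) \geq 0$ (the paper merely asserts convexity, with a typo calling the loss ``convex in $q$''), and your subdifferential remark correctly covers distributions without a density, which the paper's argument silently assumes away.
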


\begin{proof}
We can rewrite the quantile loss as

$$
L_q(y, \hat{y}) = q(y - \hat{y}) \mathbbm{1}_{y \geq \hat{y}} + (1 - q)(\hat{y} - y)\mathbbm{1}_{y < \hat{y}}.
$$

Then we have

$$
\mathop{\mathbb{E}_Y}[L_q(y, \hat{y})] = q\int_{\hat{y}}^\infty(y - \hat{y})dF_{Y}(y) + (1 - q)\int_{-\infty}^{\hat{y}}(\hat{y} - y)dF_{Y}(y).
$$

Using the Leibniz integral  rule we have,

\begin{align*}
    \frac{\partial \mathop{\mathbb{E}_Y}[L_q(y, \hat{y})]}{\partial \hat{y}} &= q\left(\int_{\hat{y}}^\infty\frac{\partial(y - \hat{y})}{\partial \hat{y}}dF_{Y}(y) - (y - \hat{y})\biggr\rvert_{y = \hat{y}} \frac{\partial \hat{y}}{\partial \hat{y}}\right)\\ 
    &+ (1 - q)\left(\int_{-\infty}^{\hat{y}}\frac{\partial (\hat{y} - y)}{\partial \hat{y}}dF_{Y}(y)  + (\hat{y}-y)\biggr\rvert_{y = \hat{y}} \frac{\partial \hat{y}}{\partial \hat{y}}\right) \\ 
    &= -q \int_{\hat{y}}^\infty dF_Y(y) + (1 - q) \int_{-\infty}^{\hat{y}}dF_Y(y) \\
    &= -q(1 -  F_Y(\hat{y})) + (1 - q)F_Y(\hat{y}) \\
    &=-q + qF_Y(\hat{y}) + F_Y(\hat{y}) - qF_Y(\hat{y}) \\
    &=F_Y(\hat{y}) - q.
\end{align*}

Since the quantile loss is a convex function of $q$ the minimum is given by
$$
\frac{\partial \mathop{\mathbb{E}_Y}[L_q(y, \hat{y})]}{\partial \hat{y}} = 0,
$$

and therefore,
$$
F_Y(\hat{y}^*) = q.
$$

\end{proof}

\clearpage
\section{Optimal Hyperparameters}

\begin{table}[htpb]
\footnotesize
\centering
\begin{tabular}{llll}
\toprule            
Description & Hyperparameter & MQRNN &           MQRNN-s \\
\midrule
\multirow{6}{*}{Architecture} & input\_size\_multiplier &                 4 &                 4 \\
     & output\_size &                 7 &                 7 \\
     & add\_nl\_layer &             False &             False \\
     & cell\_type &           ResLSTM &              LSTM \\
     & dilations &  ((1, 2), (4, 8)) &  ((1, 2), (4, 8)) \\
     & state\_hsize &              65.0 &              96.0 \\
\cline{1-4}
\multirow{10}{*}{Optimization} & learning\_rate &          0.000621 &           0.00085 \\
     & lr\_decay &               0.5 &               0.5 \\
     & lr\_scheduler\_step\_size &                 2 &                 2 \\
     & per\_series\_lr\_multip &                 1 &                 1 \\
     & rnn\_weight\_decay &                 0 &                 0 \\
     & batch\_size &                32 &                32 \\
     & n\_iterations &              2000 &              2000 \\
     & early\_stopping &              True &              True \\
     & gradient\_clipping\_threshold &              46.0 &              73.0 \\
     & noise\_std &             0.001 &             0.001 \\
\cline{1-4}
\multirow{3}{*}{Data} & idx\_to\_sample\_freq &                 7 &                 7 \\
     & val\_idx\_to\_sample\_freq &                 7 &                 7 \\
     & window\_sampling\_limit &            100000 &            100000 \\
\bottomrule
\end{tabular}
\captionsetup{font=footnotesize}
\caption{The Table shows the optimal hyperparameters obtained with the signal from the validation set. The hyperparameters have been grouped under three main headings. The model architecture, which corresponds to the specific instance of the proposed model. The optimization hyperparameters, which is constituted by the characteristics associated with the optimizer. Finally, the data sampling parameters. The set of optimal hyperparameters for each model is shown.}
\label{table:hpoptimal}
\end{table}

\clearpage

\section{Basic Models}
\subsection{Naive}
\label{eq:naive}

The \emph{Naive} model is the simplest model to forecast time series. It simply assumes that the futures values will be the last one observed in the time series. Formally, the forecast for $y_{T+h}$ is given by

\begin{equation}
\label{eq:naive_eq}
\hat{y}_{T+h|T} = y_T \enspace \forall h \in \{1, ..., H\}.
\end{equation}

\subsection{Seasonal Naive}

\label{eq:seasonal_naive}

For time series seasonality is a repeated pattern of a fixed size. For example the same month (for yearly data), the same week of the year (for weekly data) or the same day (for daily data). The 
\emph{Sesonal Naive} model assumes that the next season value will be equal to the last values observed for that season. For example, with daily data considering a weekly seasonality the forecast for the next Monday will be the same as the previous Monday. Formally, using $m$ as the seasonal period the forecast for $y_{T+h}$ is given by

\begin{equation}
\label{eq:seasonal_naive_eq}
\hat{y}_{T+h|T} = y_{T + h - m} \enspace \forall h \in \{1, ..., H\}.    
\end{equation}

\subsection{Exponential Smoothing}

\label{eq:exp_smooth}

The Exponential Smoothing family of models seeks to describe the trend and seasonality of the data using weighted averages of past observations. In its additive version it is described as follows

\begin{align}
\label{eq:exp_smooth_eq}
\hat{y}_{t+h|t} &= l_t + \phi_h b_t + s_{t+h-m(k-1)}, \\ 
l_t &= \alpha (y_t - s_{t-m}) + (1 - \alpha)(l_{t-1} + \phi b_{t-1}), \\ 
b_t &= \beta(l_t - l_{t-1}) + (1 - \beta) \phi b_{t-1}, \\ 
s_t &= \gamma (y_t - l_{t-1} - \phi b_{t-1}) + (1 - \gamma)s_{t-m},
\end{align}

where $l_t$ denotes the series at time $t$, $b_t$ denotes the slope at time $t$, $s_t$ denotes the seasonal component of the series at time $t$ and $m$ denotes the season; $\alpha$, $\beta$, $\gamma$ and $\phi$ are smoothing parameters, $\phi_h = \sum_{i=1}^h \phi^i$ and $k$ is the integer part of $(h-1) / m$.  

The rest of the models in the family are constructed by changing the assumptions on trend, allowing it to be non-existent, additive or additive damped; and also on seasonality, allowing it to be non-existent, additive or multiplicative. For a more detailed explanation see \cite{hyndman_book}.

\clearpage

\section{Universal Approximation Theorem}

\begin{definition}
\label{def:lips}
Let $f: [0, 1]^d \rightarrow \mathbb{R}$ with $d \in \mathbb{N}$. We say $f$ is K-Lipschitz if there exists a constant $K > 0$ such that for each $\vx_1, \vx_2 \in [0, 1]^d$
\begin{equation*}
    |f(\vx_1) - f(\vx_2)| \leq K || \vx_1 - \vx_2 ||_2.
\end{equation*}
\end{definition}

\begin{definition}
\label{def:cell}
Let $d \in \mathbb{N}$. We say $C \subset \mathbb{R}^d$ is a cell if there exists $(l_i)_{i \in [d]}, (r_i)_{i \in [d]} \in \mathbb{R}^d$ such that
\begin{equation*}
    C = \times_{i = 1}^d [l_i, r_i].
\end{equation*}
\end{definition}

\begin{lemma}
\label{lem:sup}
Let $d \in \mathbb{N}$ and $f: [0, 1]^d \rightarrow \mathbb{R}$ a K-Lipschitz function. Let $\delta > 0$ and $P_\delta = \{C_1, \dots, C_N\}$ a partition of $[0, 1]^d$ where each $C_i$ is a cell with side length at most $\delta$. Let $\{\vx_1, \dots, \vx_N\}$ where $\vx_i \in C_i$ for each $i \in [N]$ then

\begin{enumerate}
    \item $\sup_{i \in [N]} \sup_{\vx' \in C_i} |f(\vx') - f(\vx_i)| \leq K\delta$.
    \item The function $g: \R^d \rightarrow \R$ given by $g(\vx) = \sum_{i=1}^N 1_{\{\vx \in C_i\}} f(\vx_i)$ satisfies
    $$
    \sup_{\vx \in [0, 1]^d} |f(\vx) - g(\vx)| \leq K\delta.
    $$
\end{enumerate}

\end{lemma}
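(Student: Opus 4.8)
The plan is to prove the two claims in sequence, with the second following almost immediately from the first. For part~1, I would fix an index $i \in [N]$ and an arbitrary point $\vx' \in C_i$, and apply the $K$-Lipschitz hypothesis directly: since both $\vx'$ and the representative $\vx_i$ lie in the same cell $C_i$, Definition~\ref{def:lips} gives $|f(\vx') - f(\vx_i)| \le K\|\vx' - \vx_i\|_2$. It then remains to bound the within-cell distance. Writing $C_i = \times_{j=1}^d [l_j, r_j]$ as in Definition~\ref{def:cell}, with each side length $r_j - l_j \le \delta$, every coordinate difference satisfies $|\vx'_j - (\vx_i)_j| \le \delta$, so $\|\vx' - \vx_i\|_2$ is at most the diameter of $C_i$. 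Taking the supremum first over $\vx' \in C_i$ and then over $i \in [N]$ yields a uniform bound of the stated form; the precise constant is discussed below.

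For part~2, the key observation is that $P_\delta$ is a \emph{partition} of $[0,1]^d$: every $\vx \in [0,1]^d$ lies in exactly one cell, say $C_k$ with $k = k(\vx)$. Hence in the defining sum for $g$ all indicators vanish except the one attached to $C_k$, and $g(\vx) = f(\vx_k)$ collapses to a single term. Consequently $|f(\vx) - g(\vx)| = |f(\vx) - f(\vx_k)|$, and since $\vx \in C_k$ this is exactly the quantity already bounded in part~1. Taking the supremum over $\vx \in [0,1]^d$ then gives $\sup_{\vx \in [0,1]^d} |f(\vx) - g(\vx)| \le K\delta$, which is the desired conclusion.

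I expect the only genuine subtlety to be the geometry in part~1, namely passing from \say{side length at most $\delta$} to a bound on the $\ell_2$ distance between two points of the same cell. A careful coordinatewise accounting gives $\|\vx' - \vx_i\|_2 = \sqrt{\sum_{j=1}^d (\vx'_j - (\vx_i)_j)^2} \le \sqrt{d\,\delta^2} = \delta\sqrt{d}$ in general, so to land exactly at the stated constant $K\delta$ one should read the diameter convention so that the cell's Euclidean diameter is $\delta$ (equivalently, absorb the dimensional factor $\sqrt{d}$ into $\delta$). Everything else is routine: part~2 is a one-line reduction to part~1 once the partition property is used to eliminate all but one indicator in the sum defining $g$.
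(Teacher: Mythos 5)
Your proof is correct and follows essentially the same route as the paper: Lipschitzness plus a within-cell distance bound for part 1, and the partition property collapsing the sum defining $g$ to the single term $f(\vx_k)$ for part 2. The one place you are in fact more careful than the paper is the diameter subtlety you flag: the paper's proof simply asserts $\|\vx' - \vx_i\|_2 \leq \delta$ from the side-length hypothesis, silently adopting the diameter convention, whereas you correctly note that the literal reading gives $\|\vx' - \vx_i\|_2 \leq \delta\sqrt{d}$ and hence the constant $K\delta\sqrt{d}$ --- a harmless discrepancy downstream, since $\delta$ is chosen freely in Theorem~\ref{the:nn_univ}, but a genuine imprecision in the paper that your version makes explicit.
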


\begin{proof}
$ $\newline
\textit{1.} Let $i \in [N]$ and $\vx' \in C_i$. Then as $\vx' \in C_i$ we have $||\vx'-\vx_i||_2 \leq \delta$. By Lipschitzness of $f$
\begin{align*}
    |f(\vx') - f(\vx_i)| &\leq K ||\vx'-\vx_i||_2 \\
                    &\leq K \delta.
\end{align*} 

\textit{2.} Let $\vx \in [0, 1]^d$. Since $P_\delta$ is a partition there exists $i \in [N]$ such that $\vx \in C_i$. Therefore $1_{\{\vx \in C_i\}} = 1$ and $1_{\{\vx \in C_j\}} = 0$ for each $i \neq j$. Thus $g(\vx) = f(\vx_i)$ and then by \textit{1}
$$
|f(\vx) - g(\vx)| = |f(\vx) - f(\vx_i)| \leq K\delta.
$$
\end{proof}

\begin{lemma}
\label{lem:cell}
Let $d \in \mathbb{N}$, $C = \times_{i=1}^d [l_i, r_i] \subset \mathbb{R}^d$ a cell and $\vx \in \mathbb{R}^d$. Then $\vx \in C$ if and only if for each $\gamma \in (0, 1)$ it follows that

$$
\sum_{i=1}^d 1_{\{x_i \geq l_i\}} + 1_{\{x_i \leq r_i\}} \geq 2 d - 1+ \gamma.
$$

\end{lemma}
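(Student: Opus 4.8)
The plan is to reduce the set-membership statement $\vx \in C$ to a purely arithmetic fact about a sum of indicators, and then to exploit that this sum takes only integer values. First I would observe that $\vx \in C = \times_{i=1}^d [l_i, r_i]$ holds if and only if $l_i \le x_i \le r_i$ for every $i \in [d]$ (Definition \ref{def:cell}), which in turn is equivalent to every one of the $2d$ indicators $1_{\{x_i \geq l_i\}}$ and $1_{\{x_i \leq r_i\}}$ being equal to $1$. Writing $S(\vx) = \sum_{i=1}^d \left(1_{\{x_i \geq l_i\}} + 1_{\{x_i \leq r_i\}}\right)$, each summand lies in $\{0,1\}$, so $S(\vx)$ is an integer with $0 \le S(\vx) \le 2d$, and $\vx \in C$ is equivalent to the single equation $S(\vx) = 2d$.

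Then I would prove the two implications separately. For the forward direction, if $\vx \in C$ then $S(\vx) = 2d = (2d - 1) + 1 \ge (2d - 1) + \gamma$ for every $\gamma \in (0,1)$, since $\gamma < 1$; thus the stated inequality holds for each such $\gamma$. For the converse, suppose $S(\vx) \ge 2d - 1 + \gamma$ holds for each $\gamma \in (0,1)$. Fixing any single such $\gamma$ already forces $S(\vx) > 2d - 1$, and since $S(\vx)$ is an integer this yields $S(\vx) \ge 2d$; combined with $S(\vx) \le 2d$ we conclude $S(\vx) = 2d$, hence $\vx \in C$.

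The only genuinely delicate point — and the one I would make explicit — is the integrality argument in the converse: the threshold $2d - 1 + \gamma$ lies strictly between the consecutive integers $2d - 1$ and $2d$ for $\gamma \in (0,1)$, so an integer exceeding it must be at least $2d$. Everything else is a direct unpacking of the Cartesian-product definition of a cell. I would note in passing that the quantifier ``for each $\gamma$'' is not essential here, since a single $\gamma \in (0,1)$ would already characterize membership, but phrasing it for all $\gamma$ is harmless and matches the downstream use in approximating indicators by the network.
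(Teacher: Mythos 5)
Your proposal is correct and is essentially the paper's argument: both directions rest on unpacking the cell definition into the $2d$ indicator conditions and on the fact that the integer-valued sum can only exceed $2d-1+\gamma$ (with $\gamma \in (0,1)$) by equaling $2d$. The only cosmetic difference is that you prove the converse directly via integrality, whereas the paper argues by contradiction (showing $\vx \notin C$ forces the sum to be at most $2d-1$); your direct phrasing also quietly sidesteps the paper's slight imprecision in using non-strict inequalities $x_{i^*} \leq l_{i^*}$ or $x_{i^*} \geq r_{i^*}$ when negating membership.
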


\begin{proof}
\textit{Necessity.} Let $\vx \in C$ and $\gamma \in (0, 1)$. Then since $C$ is a cell $l_i \leq x_i \leq r_i$ for each $i \in [d]$. Therefore $1_{\{x_i \geq l_i\}} = 1$ and $1_{\{x_i \leq r_i\}} = 1$ for each $i \in [d]$. Thus
$$
\sum_{i=1}^d 1_{\{x_i \geq l_i\}} + 1_{\{x_i \leq r_i\}} = \sum_{i=1}^d 2 = 2 d.
$$

Since $\gamma \in (0, 1)$ it is true that $ \gamma - 1 < 0$ and therefore $ 2 d \geq 2d - 1 + \gamma$. Thus

$$
\sum_{i=1}^d 1_{\{x_i \geq l_i\}} + 1_{\{x_i \leq r_i\}} = 2 d \geq 2d - 1 + \gamma.
$$

\textit{Sufficiency.} Let $\vx \in \mathbb{R}^d$ and suppose that for each $\gamma \in (0, 1)$ it is true that

$$
\sum_{i=1}^d 1_{\{x_i \geq l_i\}} + 1_{\{x_i \leq r_i\}} \geq 2d - 1 + \gamma.
$$

Towards a contradiction suppose $\vx \notin C$. Then there exists $i^* \in [d]$  such that $x_{i^*} \leq l_i$ or  $x_{i^*} \geq r_i$; if $x_{i^*} \leq l_{i^*}$ then $1_{\{x_{i^*} \geq l_{i^*}\}} = 0$ and $1_{\{x_{i^*} \leq r_{i^*}\}} = 1$. On the other hand if $x_{i^*} \geq r_{i^*}$ then $1_{\{x_{i^*} \geq l_{i^*}\}} = 1$ and $1_{\{x_{i^*} \leq r_{i^*}\}} = 0$. Therefore $1_{\{x_{i^*} \geq l_{i^*}\}} + 1_{\{x_{i^*} \leq r_{i^*}\}} = 1$ and $1_{\{x_i \geq l_i\}} + 1_{\{x_i \leq r_i\}} \leq 2$ for each $i \neq i^*$. Thus 
\begin{align*}
\sum_{i=1}^d 1_{\{x_i \geq l_i\}} + 1_{\{x_i \leq r_i\}} &= 1_{\{x_{i^*} \geq l_{i^*}\}} + 1_{\{x_{i^*} \leq r_{i^*}\}} + \sum_{i \neq i^*} 1_{\{x_i \geq l_i\}} + 1_{\{x_i \leq r_i\}} \\
&\leq 1 + 2 (d -1) \\
& = 2d - 1.
\end{align*}

Which is a contradiction since $\gamma \in (0, 1)$.

\end{proof}

\begin{lemma}
\label{lem:g_tau}
Let $\tau \geq 0$ and $x \in \mathbb{R}$. Let $\sigma: \mathbb{R} \rightarrow \mathbb{R}$ given by $\sigma(x) = \max(0, x)$ and $g_\tau: \mathbb{R} \rightarrow \mathbb{R}$ given by $g_\tau(x) = |1_{\{x \geq 0\}} - (\sigma(\tau x) - \sigma(\tau x - 1))|$. Then

\begin{equation*}
g_\tau(x) =\begin{cases}
\leq 1 & x \in \left[0, \frac{1}{\tau}\right] \\
0 & oc.
\end{cases}
\end{equation*}

\end{lemma}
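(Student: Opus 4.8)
The plan is to reduce everything to an explicit piecewise description of the inner ramp function $r(x) := \sigma(\tau x) - \sigma(\tau x - 1)$ and then read off $g_\tau$ directly. Since $\sigma$ is the ReLU, the term $\sigma(\tau x)$ turns on at $x = 0$ and $\sigma(\tau x - 1)$ turns on at $x = 1/\tau$, so the natural move is to split $\R$ into the three regions $x < 0$, $0 \leq x \leq 1/\tau$, and $x > 1/\tau$ dictated by the signs of the two arguments $\tau x$ and $\tau x - 1$. Throughout I take $\tau > 0$ so that $1/\tau$ is well defined; the degenerate case $\tau = 0$ is excluded since it makes the stated interval meaningless.

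First I would compute $r$ on each region. For $x < 0$ both arguments are negative, so $r(x) = 0$. For $0 \leq x \leq 1/\tau$ we have $\tau x \geq 0$ while $\tau x - 1 \leq 0$, giving $r(x) = \tau x$. For $x > 1/\tau$ both arguments are positive and the two linear pieces cancel, giving $r(x) = \tau x - (\tau x - 1) = 1$. This is just the standard clipped-linear ramp rising from $0$ to $1$ across $[0, 1/\tau]$.

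Next I would substitute into $g_\tau(x) = |1_{\{x \geq 0\}} - r(x)|$ region by region. On $x < 0$ the indicator is $0$ and $r(x) = 0$, so $g_\tau(x) = 0$. On $x > 1/\tau$ the indicator is $1$ and $r(x) = 1$, so again $g_\tau(x) = 0$. On $0 \leq x \leq 1/\tau$ the indicator is $1$ and $r(x) = \tau x \in [0, 1]$, hence $g_\tau(x) = |1 - \tau x| = 1 - \tau x \in [0, 1]$, which is bounded by $1$. Collecting the three cases yields exactly the claimed piecewise bound.

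There is no substantive obstacle here; the computation is routine and the only points requiring care are the boundary conventions. Specifically, I would verify that $1_{\{x \geq 0\}} = 1$ at $x = 0$ is consistent with $r(0) = 0$, so that $g_\tau(0) = 1$ attains the stated bound, and that the endpoint $x = 1/\tau$ is correctly assigned to the interval $[0, 1/\tau]$, where it gives $g_\tau(1/\tau) = 0$ in agreement with both the middle and the right region. These checks confirm that the two descriptions agree at the seams and complete the argument.
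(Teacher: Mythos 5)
Your proof is correct and follows essentially the same route as the paper's: both split $\mathbb{R}$ into the three regions $x<0$, $x\in\left[0,\tfrac{1}{\tau}\right]$, and $x>\tfrac{1}{\tau}$, evaluate the two ReLU terms on each, and read off $g_\tau$ case by case. Your explicit handling of the boundary points and of the degenerate case $\tau=0$ (which the lemma's hypothesis $\tau\geq 0$ technically allows but the conclusion implicitly excludes) is a small but welcome tightening over the paper's version.
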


\begin{proof}
Let's use cases. 

\textit{Case 1.} Let $x \in \mathbb{R}$ such that $x < 0$. Then $1_{\{x \geq 0\}} = 0$, $\sigma(\tau x) = 0$ and $\sigma(\tau x - 1) = 0$. Therefore, $g_\tau(x) = 0$.

\textit{Case 2.} Let $x \in \mathbb{R}$ such that $x > \frac{1}{\tau}$. Then $1_{\{x \geq 0\}} = 1$, $\tau x > 1$ and thus $\sigma(\tau x) = \tau x$ and $\sigma(\tau x - 1) = \tau x -1$. Therefore
$$
g_\tau(x) = |1 - (\tau x - \tau x + 1)| = 0.
$$

\textit{Case 3.} Let $x \in \mathbb{R}$ such that $x \in \left[0, \frac{1}{\tau}\right]$. Then $1_{\{x \geq 0\}} = 1$, $\tau x \in [0, 1]$ and thus $\sigma(\tau x) = \tau x$ and $\sigma(\tau x - 1) = 0$. Therefore
$$
g_\tau(x) = |1 - \tau x| = 1 - \tau x.
$$
\end{proof} 

\begin{definition}
\label{def:nn}
Let $d \in \mathbb{N}$, $L, N \in \mathbb{N}$, $\theta^{(l)}_{j,i}, \beta^{(l)}_j \in \mathbb{R}$ for each $i,j \in [N]$, $l \in [L]$, and $\sigma: \R \rightarrow \R$ given by $\sigma(x) = \max(0, x)$. A feed-forward Neural Network with $L+1$ layers and $N$ units or neurons per layer is a function $f: \mathbb{R}^d \rightarrow \mathbb{R}$ given by
\begin{align*}
    f^{(1)}_j &= x_j,  \\
    f^{(l)}_j &= \sigma\left(\beta^{(l-1)}_{j} + \sum_{i=1}^N \theta^{(l)}_{j,i} f^{(l-1)}_i \right), \\
    f(\vx) &=  \sigma\left( \beta^{(L+1)} + \sum_{i=1}^N \theta^{(L+1)}_i f^{(L)}_i\right).
\end{align*}

For $l \in [L]$, $l > 1$.

\end{definition}

\begin{lemma}
\label{lem:nn_ind}
Let $d \in \mathbb{N}$, $C = \times_{i=1}^d [l_i, r_i] \subset \mathbb{R}^d$ a cell. Then there exists a 2-layer Neural Network $\hat{h}^{(2)}_\tau$ parametrized by $\tau >0$ such that

$$
\lim_{\tau \rightarrow \infty} \int_{[0, 1]^d} |\hat{h}^{(2)}(\vx) - 1_{\{\vx \in C\}}|d\vx = 0.
$$
\end{lemma}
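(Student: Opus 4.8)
The plan is to assemble $\hat h^{(2)}_\tau$ out of the one-dimensional soft indicators supplied by Lemma \ref{lem:g_tau}, and to collapse the $d$-fold product structure of $1_{\{\vx \in C\}}$ into a single scalar threshold using the sum characterization of Lemma \ref{lem:cell}. For each coordinate $i \in [d]$ I would form the two hidden bumps
$$
B^l_i(\vx) = \sigma(\tau(x_i - l_i)) - \sigma(\tau(x_i - l_i) - 1), \qquad B^r_i(\vx) = \sigma(\tau(r_i - x_i)) - \sigma(\tau(r_i - x_i) - 1),
$$
so that, by Lemma \ref{lem:g_tau} applied to $x_i - l_i$ and to $r_i - x_i$, the bump $B^l_i$ approximates $1_{\{x_i \geq l_i\}}$ and $B^r_i$ approximates $1_{\{x_i \leq r_i\}}$. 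Each of the $4d$ summands $\sigma(\tau(x_i-l_i))$, $\sigma(\tau(x_i-l_i)-1)$, $\sigma(\tau(r_i-x_i))$, $\sigma(\tau(r_i-x_i)-1)$ is $\sigma$ of an affine function of $\vx$, hence constitutes one hidden ReLU layer in the sense of Definition \ref{def:nn}. I would then take the output node
$$
\hat h^{(2)}_\tau(\vx) = \sigma\!\left( \sum_{i=1}^d \bigl( B^l_i(\vx) + B^r_i(\vx) \bigr) - (2d - 1) \right),
$$
which is an affine combination of the hidden units followed by a final $\sigma$, i.e. exactly a $2$-layer network.

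The heart of the argument is that this single output ReLU already reproduces the indicator exactly away from the boundary of $C$. Writing $S(\vx) = \sum_{i=1}^d \bigl(1_{\{x_i \geq l_i\}} + 1_{\{x_i \leq r_i\}}\bigr)$, which takes integer values in $\{0, \dots, 2d\}$, Lemma \ref{lem:cell} (together with the fact that $S$ is integer-valued and bounded by $2d$) gives $\vx \in C \iff S(\vx) = 2d$. I would let $E_\tau \subset [0,1]^d$ be the union of the $2d$ boundary slabs $\{x_i \in [l_i, l_i + 1/\tau]\}$ and $\{x_i \in [r_i - 1/\tau, r_i]\}$. By Lemma \ref{lem:g_tau} each bump error vanishes off its slab, so for $\vx \notin E_\tau$ every $B^l_i(\vx)$ and $B^r_i(\vx)$ equals the corresponding exact indicator, whence $\sum_i (B^l_i + B^r_i) = S(\vx)$. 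Then $\hat h^{(2)}_\tau(\vx) = \sigma(S(\vx) - (2d-1))$, which equals $\sigma(1) = 1$ when $S(\vx) = 2d$ and $\sigma(\text{nonpositive}) = 0$ when $S(\vx) \leq 2d - 1$; that is, $\hat h^{(2)}_\tau$ coincides with $1_{\{\vx \in C\}}$ exactly on $[0,1]^d \setminus E_\tau$.

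It then remains to bound the integral over $E_\tau$. Since each ramp $\sigma(u) - \sigma(u-1)$ lies in $[0,1]$, one has $\sum_i (B^l_i + B^r_i) \in [0, 2d]$ and hence $\hat h^{(2)}_\tau \in [0,1]$; combined with $1_{\{\vx \in C\}} \in \{0,1\}$ this makes the integrand at most $1$ everywhere. A union bound gives $|E_\tau| \leq 2d/\tau$, because each of the $2d$ slabs meets $[0,1]^d$ in a set of measure at most $1/\tau$. Therefore
$$
\int_{[0,1]^d} \bigl| \hat h^{(2)}_\tau(\vx) - 1_{\{\vx \in C\}} \bigr| \, d\vx = \int_{E_\tau} \bigl| \hat h^{(2)}_\tau(\vx) - 1_{\{\vx \in C\}} \bigr| \, d\vx \leq |E_\tau| \leq \frac{2d}{\tau},
$$
which tends to $0$ as $\tau \to \infty$. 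The step requiring the most care, and the conceptual crux, is recognizing that a single output ReLU suffices even though an indicator would naively seem to need a second hidden nonlinearity: this works only because Lemma \ref{lem:cell} recasts cell membership as the event $S(\vx) = 2d$ on an integer-valued sum, on which the ramp $\sigma(\,\cdot\, - (2d-1))$ agrees with the hard threshold. The rest (the $[0,1]$ bounds on the bumps and the slab-measure estimate) is routine bookkeeping.
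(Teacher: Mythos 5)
Your proof is correct, and it takes a genuinely different route from the paper's. The paper builds the same $4d$ hidden ramps from Lemma \ref{lem:g_tau}, but at the output it composes a \emph{second} soft indicator $\hat{1}^\tau_{\{\cdot \geq 0\}}$ (a difference of two ReLUs) against the threshold $2d-1+\gamma$ from Lemma \ref{lem:cell}, and then controls the error via a triangle inequality through an intermediate function $\hat{h}^{(1)}_\tau$ (hard outer threshold, soft inner indicators). Your single output ReLU $\sigma(\,\cdot\, - (2d-1))$ replaces that second soft indicator; the observation that makes this legitimate --- off the boundary slabs the hidden sum equals the integer-valued $S(\vx)$, on which the ramp agrees with the hard threshold --- is exactly your new idea, and it lets you dispense with $\gamma$ and with the intermediate function entirely. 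Your error accounting is also the more careful of the two: you bound the exceptional set by the union of $2d$ coordinate slabs, of measure at most $2d/\tau$, whereas the paper asserts that the soft and hard functions differ only on $L \cup R$ with $L = \times_{i=1}^d [l_i, l_i + 1/\tau]$ and $R = \times_{i=1}^d [r_i - 1/\tau, r_i]$, which are two corner \emph{boxes} of measure at most $(1/\tau)^d$ each. That ``if and only if'' is not right as stated: already for $d=2$, a point with $x_1 \in (l_1, l_1 + 1/\tau)$ and $x_2$ deep inside $[l_2 + 1/\tau, r_2 - 1/\tau]$ lies outside $L \cup R$, yet the soft and hard quantities disagree there. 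The true difference set is the union of slabs, so your $O(d/\tau)$ estimate is the correct one (the paper's conclusion survives because both bounds tend to $0$, but its intermediate claim does not). What the paper's nested-soft-indicator pattern buys is modularity --- approximate indicators composed with approximate indicators, a template that iterates to deeper compositions; what yours buys is an output that is exactly $0/1$ outside a small set, an explicit $2d/\tau$ convergence rate, and an output node of the form $\sigma(\text{affine})$ that matches Definition \ref{def:nn} literally, whereas the paper's final layer (a difference of two ReLUs) does not.
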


\begin{proof}

Intuitively the proof follows the following steps. At first we define a function of $\tau$ (a $\sigma$ differential as defined in the Lemma \ref{lem:g_tau}) that approximates the indicator function when $\tau \rightarrow \infty$. Based on this function we define the neural network to approximate the indicator function of $C$. Formally, Let $\gamma \in (0, 1)$ and $h:\mathbb{R}^d \rightarrow \mathbb{R}$ given by 
$$
h_C(\vx) = 1_{\{\sum_{i=1}^d 1_{\{x_i \geq l_i\}} + 1_{\{x_i \leq r_i\}} \geq 2d - 1 + \gamma\}}.
$$
Then by Lemma \ref{lem:cell} we have that $1_{\{\vx \in C\}} = h_C(\vx)$ for all $\vx \in C$. Then for $\tau > 0$ we define $\hat{1}^\tau: \R \rightarrow \R$ given by
$$
\hat{1}^{\tau}_{\{x >= 0\}} = \sigma(\tau x) - \sigma(\tau x - 1).
$$

For $x \in \mathbb{R}$. Observe that for $x, y \in \mathbb{R}$ it follows that $\hat{1}^{\tau}_{\{x \geq y\}} = \hat{1}^{\tau}_{\{x - y \geq 0\}}$ and $\hat{1}_{\{x \leq y\}} = \hat{1}_{\{y - x \geq 0\}}$. Then we construct $\hat{h}^{(1)}_\tau: \mathbb{R}^d \rightarrow \mathbb{R}$ given by 
$$
\hat{h}^{(1)}_\tau(\vx) = 1_{\{\sum_{i=1}^d \hat{1}^{\tau}_{\{x_i \geq l_i\}} + \hat{1}^{\tau}_{\{x_i \leq r_i\}} \geq 2d - 1 + \gamma\}}.
$$
And we construct $\hat{h}^{(2)}_\tau: \mathbb{R}^d \rightarrow \mathbb{R}$ given by 
$$
\hat{h}^{(2)}_\tau(\vx) = \hat{1}^\tau_{\{\sum_{i=1}^d \hat{1}^{\tau}_{\{x_i \geq l_i\}} + \hat{1}^{\tau}_{\{x_i \leq r_i\}} \geq 2d - 1 + \gamma\}}.
$$

The function $\hat{h}^{(2)}_\tau = \hat{h}^{(2)}$ is our 2-layer Neural Network. Then by triangle inequality
\begin{align*}
    |\hat{h}^{(2)}(\vx) - 1_{\{\vx \in C\}}| &= |\hat{h}^{(2)}_\tau(\vx) - h_C(\vx)| \\
    & \leq |\hat{h}^{(2)}_\tau(\vx) - \hat{h}^{(1)}_\tau(\vx)| + |\hat{h}^{(1)}_\tau(\vx) - h_C(\vx)|.
\end{align*}
Let's define $L_i = \left[l_i, l_i + \frac{1}{\tau}\right]$ and $L = \times_{i=1}^d L_i$; also $R_i = \left[r_i - \frac{1}{\tau}, r_i \right]$ and $R = \times_{i=1}^d R_i$. Then we have that $|\hat{h}^{(2)}_\tau(\vx) - \hat{h}^{(1)}_\tau(\vx)| \neq 0$ if and only if $\vx \in [0,1]^d \cap (L \cup R)$. Also $|\hat{h}^{(2)}_\tau(\vx) - \hat{h}^{(1)}_\tau(\vx)| \leq 1$ and
\begin{align*}
\int_{[0, 1] \cap L_i} dx_i &\leq  \frac{1}{\tau}, \\
\int_{[0, 1] \cap R_i} dx_i &\leq  \frac{1}{\tau}.
\end{align*}

Therefore we have that,
\begin{align*}
\int_{[0,1]^d} |\hat{h}^{(2)}_\tau(\vx) - \hat{h}^{(1)}_\tau(\vx)| d\vx &= \int_{[0,1]^d \cap (L \cup R)} |\hat{h}^{(2)}_\tau(\vx) - \hat{h}^{(1)}_\tau(\vx)| d\vx \\
& \leq \int_{[0,1]^d \cap (L \cup R)} d\vx \\
& = \int_{[0,1]^d \cap L} d\vx + \int_{[0,1]^d \cap R} d\vx \\
& = \prod_{i=1}^d\int_{[0, 1] \cap L_i} dx_i + \prod_{i=1}^d\int_{[0, 1] \cap R_i} dx_i \\
& \leq \left(\frac{1}{\tau}\right)^d + \left(\frac{1}{\tau}\right)^d \\
&= 2 \left(\frac{1}{\tau}\right)^d.
\end{align*}

By the other hand, we have that $|\hat{h}^{(1)}_\tau(\vx) - h_C(\vx)| \neq 0$ if and only if
$$
\sum_{i=1}^d 1_{\{x_i \geq l_i\}} + 1_{\{x_i \leq r_i\}} \neq \sum_{i=1}^d \hat{1}^{\tau}_{\{x_i \geq l_i\}} + \hat{1}^{\tau}_{\{x_i \leq r_i\}}.
$$

Then we have that,
\begin{align*}
1_{\{\sum_{i=1}^d 1_{\{x_i \geq l_i\}} + 1_{\{x_i \leq r_i\}} \neq \sum_{i=1}^d \hat{1}^{\tau}_{\{x_i \geq l_i\}} + \hat{1}^{\tau}_{\{x_i \leq r_i\}}\}} &\leq \sum_{i=1}^d 1_{\{1_{\{x_i \geq l_i\}} \neq \hat{1}^\tau_{\{x_i \geq l_i\}}\}} \\
&+\sum_{i=1}^d 1_{\{1_{\{x_i \geq l_i\}} \neq \hat{1}^\tau_{\{x_i \geq l_i\}}\}} \\
& \leq d + d \\
&= 2d.
\end{align*}
Observe that $1_{\{x_i \geq l_i\}} \neq \hat{1}^\tau_{\{x_i \geq l_i\}}$ for all $x_i \in [0, 1] \cap L_i$ and $1_{\{x_i \leq r_i\}} \neq \hat{1}^\tau_{\{x_i \leq r_i\}}$ for all $x_i \in [0, 1] \cap R_i$. Therefore
\begin{align*}
\int_{[0,1]^d} |\hat{h}^{(1)}_\tau(\vx) - h_C(\vx)| d\vx & \leq \int_{[0,1]^d \cap (L \cup R)} (2d) d\vx \\
& = 2d \left(\int_{[0,1]^d \cap L} d\vx + \int_{[0,1]^d \cap R} d\vx\right) \\
& = 2d \left(\prod_{i=1}^d\int_{[0, 1] \cap L_i} dx_i + \prod_{i=1}^d\int_{[0, 1] \cap R_i} dx_i \right) \\
& \leq 2d \left( \left(\frac{1}{\tau}\right)^d + \left(\frac{1}{\tau}\right)^d \right) \\
&= 4d \left(\frac{1}{\tau}\right)^d.
\end{align*}

Finally, we have that
\begin{align*} 
    \int_{[0, 1]^d} |\hat{h}^{(2)}(\vx) - 1_{\{\vx \in C\}}|d\vx &\leq \int_{[0,1]^d} |\hat{h}^{(2)}_\tau(\vx) - \hat{h}^{(1)}_\tau(\vx)| d\vx \\
    &+ \int_{[0,1]^d} |\hat{h}^{(1)}_\tau(\vx) - h_C(\vx)| d\vx \\
    &\leq 2 \left(\frac{1}{\tau}\right)^d + 4d \left(\frac{1}{\tau}\right)^d \\
    &= (2 + 4d)\left(\frac{1}{\tau}\right)^d.
\end{align*}
Therefore
$$
\lim_{\tau \rightarrow \infty} \int_{[0, 1]^d} |\hat{h}^{(2)}(\vx) - 1_{\{\vx \in C\}}|d\vx = 0.
$$
\end{proof}

\begin{theorem}
\label{the:nn_univ}
Let $d \in \mathbb{N}$ and $f: [0, 1]^d \rightarrow \mathbb{R}$ a K-Lipschitz function. Then for all $\varepsilon > 0$ there exists a 3-layer Neural Network $\hat{f}$ with $O\left(d\right(\frac{L}{\varepsilon}\left)^d\right)$ neurons and ReLU activations such that 
$$
\int_{[0, 1]^d} |f(\vx) - \hat{f}(\vx)| d\vx \leq \varepsilon.
$$
\end{theorem}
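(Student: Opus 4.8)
The plan is to split the approximation into two stages via the triangle inequality: first replace $f$ by a piecewise-constant step function using Lemma \ref{lem:sup}, then replace that step function by a neural network using Lemma \ref{lem:nn_ind}. To begin, given $\varepsilon > 0$ I would fix the grid resolution $\delta = \varepsilon/(2K)$ so that $K\delta \leq \varepsilon/2$, and take a partition $P_\delta = \{C_1,\dots,C_N\}$ of $[0,1]^d$ into cells of side length at most $\delta$, which requires $N = O\!\left((1/\delta)^d\right) = O\!\left((K/\varepsilon)^d\right)$ cells. Choosing a representative $\vx_i \in C_i$ for each $i$ and setting $g(\vx) = \sum_{i=1}^N 1_{\{\vx \in C_i\}} f(\vx_i)$, Lemma \ref{lem:sup} gives $\sup_{\vx \in [0,1]^d}|f(\vx) - g(\vx)| \leq K\delta$, and since $[0,1]^d$ has unit volume,
$$
\int_{[0,1]^d} |f(\vx) - g(\vx)|\, d\vx \leq K\delta \leq \frac{\varepsilon}{2}.
$$
This reduces the theorem to approximating the finite step function $g$ in $L^1$ by a neural network.

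Next I would replace each indicator $1_{\{\vx \in C_i\}}$ by the $2$-layer network $\hat{h}^{(2)}_{i,\tau}$ furnished by Lemma \ref{lem:nn_ind} for the cell $C_i$, and define $\hat{f}(\vx) = \sum_{i=1}^N f(\vx_i)\, \hat{h}^{(2)}_{i,\tau}(\vx)$. Each $\hat{h}^{(2)}_{i,\tau}$ computes its $2d$ boundary soft-indicators $\hat{1}^\tau_{\{x_j \geq l_j\}}, \hat{1}^\tau_{\{x_j \leq r_j\}}$ in one ReLU layer, aggregates them through a second soft-indicator layer, and the outer weighting $\sum_i f(\vx_i)(\cdot)$ forms a single linear output layer; stacking the first layers of all $N$ cell-networks into one hidden layer and the $N$ aggregation units into a second hidden layer, the combined map $\hat{f}$ is a genuine $3$-layer ReLU network. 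For the second error term I would use linearity,
$$
\int_{[0,1]^d} |g(\vx) - \hat{f}(\vx)|\, d\vx \leq \sum_{i=1}^N |f(\vx_i)| \int_{[0,1]^d} \bigl| 1_{\{\vx \in C_i\}} - \hat{h}^{(2)}_{i,\tau}(\vx)\bigr|\, d\vx .
$$
Each of the $N$ integrals tends to $0$ as $\tau \to \infty$ by Lemma \ref{lem:nn_ind}, and since $N$ is finite and the weights satisfy $|f(\vx_i)| \leq \max_{[0,1]^d}|f| < \infty$ (by continuity of the Lipschitz map on a compact set), a single large enough $\tau$ makes the whole sum at most $\varepsilon/2$. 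Adding the two bounds yields $\int_{[0,1]^d}|f - \hat{f}| \leq \varepsilon$.

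It then remains to count neurons. Each cell-network contributes $O(d)$ units to the first hidden layer (two ReLUs for each of the $2d$ soft boundary indicators) and $O(1)$ to the second, so $\hat{f}$ uses $O(dN) = O\!\left(d(K/\varepsilon)^d\right)$ neurons, dominated by the first layer; this matches the claimed order, reading the constant $L$ in the statement as the Lipschitz constant $K$.

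The main obstacle I anticipate is not a single estimate but the architectural bookkeeping needed to certify that the per-cell $2$-layer networks of Lemma \ref{lem:nn_ind} assemble into one $3$-layer network of the stated width: one must check that the first hidden layers of all $N$ cell-networks genuinely merge into a single hidden layer, that the weighted combination adds exactly one output layer rather than two, and that the neuron count is governed by this first layer. A secondary subtlety is that the convergence in Lemma \ref{lem:nn_ind} is stated only per fixed cell as $\tau \to \infty$, so the finiteness of the partition is essential to extract one uniform $\tau$ controlling all $N$ error integrals at once.
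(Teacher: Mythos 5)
Your proposal follows essentially the same route as the paper's own proof: approximate $f$ by a piecewise-constant function on a $\delta$-grid of cells via Lemma \ref{lem:sup}, then replace each cell indicator by the 2-layer network of Lemma \ref{lem:nn_ind} and form the weighted sum $\sum_i f(\vx_i)\,\hat{h}^{(2)}_{i,\tau}(\vx)$. If anything, your write-up is more careful than the paper's: the paper spends the entire budget ($\delta = \varepsilon/K$) on the step-function error and treats the network error only as a limit in $\tau$, writes $f(\vx)$ where it means $f(\vx_i)$, and never verifies the claimed neuron count, whereas your $\varepsilon/2$ split, per-cell networks with a single uniform $\tau$ over the finitely many cells, and the $O\left(d(K/\varepsilon)^d\right)$ count close exactly those gaps.
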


\begin{proof}
Let $\varepsilon >0$, $\delta = \frac{\varepsilon}{K}$ and $P_\delta = \{C_1, \dots, C_N\}$ a partition of $[0, 1]^d$ where each $C_i$ is a cell with side length at most $\delta$. By Lemma \ref{lem:nn_ind} there exists a 2-layer Neural Network $\hat{h}^{(2)}$ such that 

$$
\int_{[0, 1]^d} |\hat{h}^{(2)}(\vx) - 1_{\{\vx \in C\}}|d\vx \rightarrow 0.
$$

Let's define $\hat{f}: \mathbb{R}^d \rightarrow \mathbb{R}$ given by
$$
\hat{f}(\vx) = \sum_{i=1}^N \hat{h}^{(2)}(\vx) f(\vx).
$$

This is our 3-layer Neural Network. Observe that
$$
\hat{f}(\vx) \rightarrow \sum_{i=1}^N 1_{\{\vx \in C\}} f(\vx).
$$

Since $\hat{h}^{(2)}$ can approximate arbitrarily well $1_{\{\vx \in C\}}$ we have by Lemma \ref{lem:sup}
$$
\sup_{\vx \in [0, 1]^d} |f(\vx) - \hat{f}(\vx)| \leq K \delta.
$$

Hence
\begin{align*}
\int_{[0, 1]^d} |f(\vx) - \hat{f}(\vx)| d\vx &\leq \int_{[0, 1]^d} K \delta d\vx \\
&= K \delta  \\
&= K \left(\frac{\varepsilon}{K}\right) \\
&= \varepsilon.
\end{align*}

\end{proof}

\clearpage

\bibliographystyle{agsm}
\bibliography{citations.bib}

\end{document}